\documentclass[sigconf]{acmart}





\copyrightyear{2020} 
\acmYear{2020} 
\setcopyright{acmcopyright}\acmConference[KDD '20]{Proceedings of the 26th ACM SIGKDD Conference on Knowledge Discovery and Data Mining}{August 23--27, 2020}{Virtual Event, CA, USA}
\acmBooktitle{Proceedings of the 26th ACM SIGKDD Conference on Knowledge Discovery and Data Mining (KDD '20), August 23--27, 2020, Virtual Event, CA, USA}
\acmPrice{15.00}
\acmDOI{10.1145/3394486.3403134}
\acmISBN{978-1-4503-7998-4/20/08}

\settopmatter{printacmref=true}

\usepackage{amsmath,amsfonts,amssymb,amsthm,commath}
\usepackage{enumerate}
\usepackage{framed}
\usepackage{xspace}
\usepackage{microtype}
\usepackage{nicefrac}

\usepackage[utf8]{inputenc}
\usepackage{CJK}
\usepackage{indentfirst}
\usepackage{booktabs,color} 
\usepackage{fancyhdr}
\usepackage{graphicx}
\usepackage[tight,footnotesize]{subfigure}
\usepackage{multirow}
\usepackage{listings}
\usepackage{color}
\usepackage{caption}
\usepackage{algorithm}
\usepackage{algorithmicx}
\usepackage[noend]{algpseudocode}

\usepackage{xspace}
\usepackage{url}
\usepackage{breakurl}

\usepackage{flushend}
\usepackage{xspace}

\newtheorem{lemma}{Lemma}[section]

\newtheorem{theorem}{Theorem}[section]

\newtheorem{definition}{Definition}[section]

\newcommand{\ban}[1]{{\textsf{\textcolor{blue}{[Ban: #1]}}}}
\newcommand{\sys}{\textit{GOLD}\xspace}

\newcommand{\sun}{\mathcal{N}_{u}}
\newcommand{\slown}{\mathcal{N}_{l}}

\newcommand{\upper}{\triangle_{\text{u}}^1}
\newcommand{\lowerr}{\triangle_{l}^1}
\newcommand{\trimin}{\triangle_{\text{min}}}
\newcommand{\hatn}{\hat{\mathcal{N}}}
\newcommand{\dia}{\Diamond^{1}}
\newcommand{\hide}[1]{}

\begin{CCSXML}
<ccs2012>
<concept>
<concept_id>10003752.10003809.10010047.10010048</concept_id>
<concept_desc>Theory of computation~Online learning algorithms</concept_desc>
<concept_significance>500</concept_significance>
</concept>
<concept>
<concept_id>10003752.10010070.10010071.10010261.10010272</concept_id>
<concept_desc>Theory of computation~Sequential decision making</concept_desc>
<concept_significance>500</concept_significance>
</concept>
</ccs2012>
\end{CCSXML}

\ccsdesc[500]{Theory of computation~Online learning algorithms}
\ccsdesc[500]{Theory of computation~Sequential decision making}

\begin{document}

\fancyhead{}
\title{Generic Outlier Detection in Multi-Armed Bandit}

\author{Yikun Ban}
\email{yikunb2@illinois.edu}
\affiliation{%
  \institution{University of Illinois at Urbana-Champaign}
}

\author{Jingrui He}
\email{jingrui@illinois.edu}
\affiliation{%
  \institution{University of Illinois at Urbana-Champaign}
}

\keywords{Multi-Armed Bandit; Bandit Algorithms; Outlier Detection; Anomaly Detection}

\begin{abstract}
In this paper, we study the problem of outlier arm detection in multi-armed bandit settings, which finds plenty of applications in many high-impact domains such as finance, healthcare, and online advertising. For this problem, a learner aims to identify the arms whose expected rewards deviate significantly from most of the other arms. Different from existing work, we target the generic outlier arms or outlier arm groups whose expected rewards can be larger, smaller, or even in between those of normal arms. To this end, we start by providing a comprehensive definition of such generic outlier arms and outlier arm groups. Then we propose a novel pulling algorithm named \sys\ to identify such generic outlier arms.
It builds a real-time neighborhood graph based on upper confidence bounds and catches the behavior pattern of outliers from normal arms.
We also analyze its performance from various aspects.
In the experiments conducted on both synthetic and real-world data sets, the proposed algorithm achieves 98\% accuracy while saving 83\% exploration cost on average compared with state-of-the-art techniques.

\end{abstract}

\maketitle

\section{Introduction}

The Multi-Armed Bandit (MAB) problems have been extensively studied with various applications such as online recommendation \cite{2010licontextual, 2019dynamic, 2016optimal}, crowdsourcing ~\cite{2009efficiently, 2015statistical,DBLP:conf/kdd/ZhouNH18}, etc. 
In the classic framework, at each round of the game, a learner is faced with a set of arms, pulls an arm, and receives a sample reward from an unknown distribution associated with it. We refer to the unknown expectation of distribution as the \emph{expected reward} of an arm. With the objective of maximizing the cumulative reward, existing approaches aim to identify a set of arms with the largest expected rewards (named top-$k$ arm identification problem) \cite{agresti1998approximate,2002finite,2010best,2014lil,2002using}. 
Another line of works investigates the thresholding bandit Problem ~\cite{2016optimal, 2019thresholding}. The learner needs to find all the arms whose mean rewards are above a given threshold.

In this paper, we focus on outlier arm detection in the MAB setting. An arm is thought of as an outlier when its expected reward deviates from most of the other arms.
Identifying outlier arms can be applied to many applications. Consider the medical tests where a disease can be modeled as an arm and observe the degree of presence on a patient. We would like to detect the arm with an exceptionally high expected reward compared to other arms, in order to identify the disease having a significantly higher presence with respect to a certain biomarker \cite{2017outlierarm}. In the online recommendation, an article can be modeled as an arm \cite{2010licontextual}. Finding the article whose expected reward is higher than the rest enables us to analyze the article with high popularity and recommend similar articles to certain users. In the meanwhile, identifying the article with a much lower expected reward compared to others also helps us to replace it with alternative articles.

\begin{figure*}[h]
\centering
\includegraphics[width =1.0 \textwidth ]{./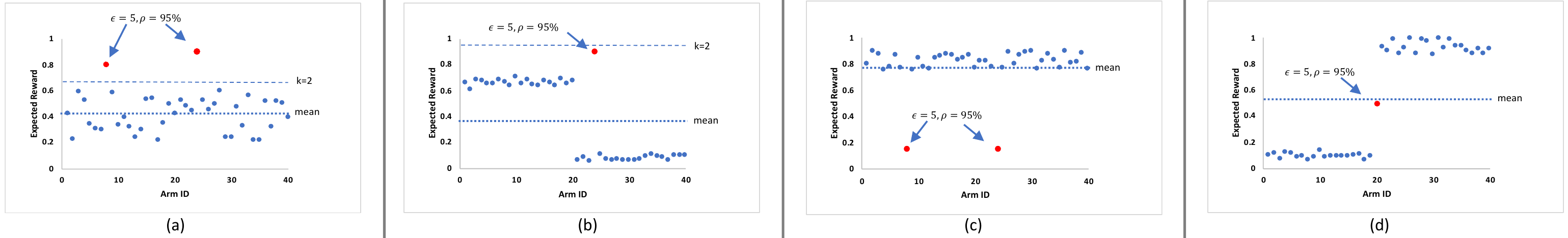}
 \vspace{-1.0em}
\caption{The outlier arms (red points) in four different global distribution can be formulated as ($\epsilon=5, \rho = 95\%$)-outlier arms. For ($k=2$)-sigma rule, it works well on the normal distribution, as shown in (a). However, it cannot identify outlier arms correctly in other distributions, as shown in (b), (c), and (d). } \label{sample}
 \vspace{-1.5em}

\end{figure*}

The outlier arm detection problem was first studied by \cite{2017outlierarm}. It classifies the arms whose expected rewards are \emph{above} a threshold as the outliers. The threshold is defined by a general statistical rule, the mean of expected rewards of all arms plus $k$ times standard deviation, called the \textbf{$k$-sigma rule}. 
Unfortunately, there are two major drawbacks of using the $k$-sigma rule to identify outlier arms. 
In this paper, we refer to the distribution of expected rewards of all arms as \emph{global distribution}, which is the unknown prior knowledge.

First, it cannot detect the outlier arms whose expected reward is not high while being far away from most of the other arms. 
Figure \ref{sample} (c) and (d) provide two intuitive examples. The arm with an unusually low expected reward compared to most of the others should be regarded as an outlier. Moreover, in a skewed global distribution, it is also possible that the expected reward of an outlier arm is close to the median. However, according to the $k$-sigma rule, \cite{2017outlierarm} is only able to detect the arms with unusually high expected rewards, and cannot be used to detect the more generic type of outlier arms.

Second, the optimal value of $k$ is difficult to choose as prior knowledge, including the global distribution, is unknown. Using the same $k$ may result in varying performance on different data sets. In Figure \ref{sample} (a), the expected value of $k$ is $2$, such that two arms denoted as two red points are identified as outlier arms. On the contrary, if $k=1$, then many normal arms (blue points) will be identified as outlier arms; if $k=3$, then one outlier arm (the topmost red point in Figure \ref{sample} (a)) will be missed. In Figure \ref{sample} (b), the outlier arm cannot be detected using the $k$-sigma rule if $k$ is set to $2$. But if we set $k=3$ or $k=1$, false positives or negatives will be introduced. Therefore, without the prior knowledge of the global distribution, it can be difficult to pick the optimal value for $k$.

To address the above challenges, we propose a more comprehensive definition for generic outlier arms: instead of only focusing on the arms with an exceptionally high expected reward, an outlier arm should have an expected reward that is `unusually far' from `most' of the other arms. 
In particular, we use two parameters $\epsilon$ and $\rho$ to quantify these two factors (see Definition \ref{def:single}). 
$\epsilon$ is to define a reasonable distance threshold, which guarantees that the shortest distance between the outlier and normal arms is much longer than the neighborhood distance of each normal arm.
$\rho$ is to restrict the smallest proportion of normal arms. For example, $\rho = 95\% $ means that the outlier arm is far away from at least $95\%$ of all the arms. In practice, the optimal values of $\epsilon$ and $\rho$ are easier to pick even if no prior knowledge is known. For example, Figure~\ref{sample} shows the outlier arms (red points) according to our proposed definition with the same pair of $\epsilon$ and $\rho$ in four different global distributions, which is consistent with our intuition.


Existing methods such as for top-$k$ arm identification  ~\cite{2013multiple, 2012pac,2016complexity}  or thresholding bandit problem ~\cite{2019thresholding, 2016tight}  cannot directly be applied to this problem, as our goal is to detect the arms with the deviating rewards instead of maximizing the rewards of a set of arms. 
Given $\epsilon$ and $\rho$ in our proposed definition, one wants to identify outlier arms that truly satisfy the criterion with a high probability. However, given a set of arms, the specific number of outlier arms is usually unknown. Moreover, even given a set of outliers, the arm deviates most should have the first priority of being inspected when the ground truth is not provided in practice.
Therefore, instead of identifying a subset of arms as outliers, we propose to rank order all the arms such that the outlier arms are ranked higher than the normal arms.

In this paper, we make the following major contributions:

\begin{enumerate}
    \item We propose a comprehensive definition for outlier arms and formally define the ($\epsilon, \rho$)-outlier arm detection problem within the MAB framework.
    \item We propose a novel pulling algorithm, Generic OutLier Detection (\sys), to identify outlier arms, given ($\epsilon, \rho$).
    \item We prove the correctness of \sys with the lower bounds of $\epsilon$ and $\rho$, and the upper bound on the max number of pulls.
    \item We evaluate our algorithm on both synthetic and real-world data sets: \sys can achieve nearly perfect accuracy on all the data sets while reducing the cost by 83\% on average compared with state-of-the-art techniques. 
\end{enumerate}

The rest of the paper is organized as follows. After a brief discussion of the related work in Section 2, we formally present the problem definition in Section 3. The proposed algorithm is introduced in Section 4, followed by the theoretical analysis in Section 5. In Section 6, we present the experimental results on both synthetic and real-world data sets, before we conclude the paper in Section 7.

\section{Related Work} \label{sec:relatedwork}

In this section, we briefly review the related works on MAB problems and outlier detection.

\noindent \textbf{MAB problems}. \xspace  A key example of solving exploration versus exploitation dilemma is the multi-armed bandit problem first proposed by \cite{1933likelihood}.
The goal of it is to minimize the cumulative regret, the difference between the average rewards of the optimal arms and the average rewards of selected arms. A large branch of works focuses on the best arm identification with the fixed confidence and fixed budget, i.e., find the best arm with a probability above a threshold below a certain number of pulls \cite{2010best, 2006action,2012best,2016optimal, 2002finite,2009pure,2011pure,2016tight}. Another line of works eliminates the suboptimal arms sequentially with certain confidence \cite{1964sequential, 1994hoeffding,2002pac}. Then, this problem has been extended to the Top-K arm problems, identify a set of K arms with the largest expected rewards ~\cite{2013multiple, 2012pac,2016complexity}. In the pure exploration setting, the learner aims at making the best exploration with a small cost, in order to optimize the performance of some decision-making tasks \cite{2011pure,2016optimal}. Finding optimal arms is different from identifying the outlier arms because the optimal arms are unnecessary to be outliers.
The thresholding bandit problem is to find a set of arms whose expected reward is larger than a threshold ~\cite{2019thresholding, 2016tight}. However, these algorithms cannot apply to the outlier arm detection because the expected rewards of all arms are unknown such that the threshold is not able to define.

\noindent \textbf{Outlier detection}. \xspace  The problem of outlier detection has been extensively studied in the data mining field~\cite{2009anomaly, 2015graph}.  
According to ~\cite{2011data}, "An outlier is a data object that deviates significantly from the rest of the objects." 
The approaches of outlier detection can be divided into statistic-based ~\cite{2005combining, 2008statistical}, graph-mining-based~\cite{2019no, zhang2017hidden}, nearest-neighbor-based~\cite{1999elliptical,2006spatial}, semi-supervised-based ~\cite{zhou2018sparc, zhou2015muvir,DBLP:conf/www/ZhouNMFH20}, and so on.
However, most of the existing approaches for outlier detection rely on the observed data rather than the online learning process.
Among them, the nearest neighbor-based or distance-based outlier \cite{2011data} is most closed to the definition of $(\epsilon, \rho)$-outlier arm.  For example, a score of an outlier is measured by its distance to its k-th nearest neighbor, called k-th Nearest Neighbor \cite{2009anomaly}.
The difference between them is that $(\epsilon, \rho)$-outlier considers the distance from the outlier to the normal arms instead of its local neighbors.

\noindent \textbf{Outlier detection in MAB}. \xspace The first work studying the outlier detection in MAB framework is the recent work~\cite{2017outlierarm}. They proposed two algorithms to identify the outlier arms using the $k$-sigma rule. One algorithm, named RR, uses the round-robin method to pull arms. Another algorithm, named WRR, uses the weighted round-robin way to save the pulling cost. Furthermore, They provide a theoretical guarantee for the correctness of the proposed algorithm with probability at least 1- $\delta$. However, as mentioned before, their algorithms only are able to detect a narrow range of outlier arms, i.e., the arms with exceptional high expected reward. Also, $k$ cannot be set in a heuristic way in the MAB framework.  Instead of applying $k$-sigma rule, in this paper, we provide a novel formula for generic outlier arms and the corresponding detection algorithm.

\section{Problem Definition}  \label{sec:problem}

In this section, we introduce the formal definitions of ($\epsilon, \rho$)-outlier arm, ($\epsilon, \rho$)-outlier arm group, and ($\epsilon, \rho$)-outlier arm detection.


Our goal here is to identify the outlier arms whose expected rewards are far away (significantly deviating) from most of the other arms.
Since the expected reward is a scalar, therefore, outlier arms can be categorized into two types. The first type is the arms with extreme expected rewards. In other words, the expected rewards of normal arms can be {\it either} higher {\it or} lower than these arms (e.g., Figure~\ref{sample}(a)-(c)). The second type is the arms without extreme expected rewards. However, the expected rewards of normal arms are {\it both} higher and lower than these arms (e.g., Figure~\ref{sample}(d)). Notice that the algorithm proposed in \cite{2017outlierarm} is only able to detection some outlier arms of the first type, i.e., those with unusually high expected rewards.

Therefore, we propose a comprehensive definition of the outlier arms based on the following three observations: (1) outlier arms are far from the normal arms in terms of the expected rewards;
(2) the expected rewards of normal arms are closer to the other normal arms in the local neighborhood compared to the outlier arms;
(3) the vast majority of all the arms are normal arms, and their expected rewards could be larger or smaller than the small number of outlier arms.

Let $\Psi = \{1, ..., n\}$ denote the set of $n$ arms. Each arm $i \in \Psi$ is associated with a probability distribution with an expected reward $y_i$ (unknown).
Given a subset of arms $\mathcal{N}\subset \Psi$, for each $i \in \mathcal{N}$, it typically has two nearest neighbor arms on two sides. One is the upper nearest neighbor arm with respect to arm $i$, and we denote the their distance as $\upper(i, \mathcal{N})$. Formally,
$$
\upper(i, \mathcal{N}) = 
\begin{cases}
0,    &\text{if} \  \not \exists i' \in \mathcal{N}, y_{i'} > y_i;\\
 \min_{i' \in \mathcal{N}, y_{i'} > y_i} (y_{i'} - y_i), & \text{otherwise.} \\
\end{cases}
$$
The other one is the lower nearest neighbor arm with respect to arm $i$, and we denote their distance as $\lowerr(i, \mathcal{N})$. Formally,
$$
\lowerr(i, \mathcal{N}) = 
\begin{cases}
0,    &\text{if} \  \not \exists i' \in \mathcal{N}, y_{i'} < y_i;\\
 \min_{i' \in \mathcal{N}, y_{i'} < y_i} (y_{i} - y_{i'}), & \text{otherwise.} \\
\end{cases}
$$
For simplicity, we assume that all arms have distinct expected rewards, i.e.,$\forall i \not = j$, $y_i \not = y_j$.

Then, given an arm $i$ and a subset $\mathcal{N}$, we define the \emph{local distance of $1$-st upper and lower nearest neighbors} $\dia$ as follows,
$$
\dia(i, \mathcal{N}) = \max \left \{ \upper(i, \mathcal{N}), \lowerr(i, \mathcal{N})  \right \}.
$$
$\dia$ measures the local proximity of an arm. For brevity, we call it \textbf{neighborhood distance}. In fact, we can extend $\dia$ to the local distance of $k$-th upper and lower nearest neighbors denoted as $\Diamond^k$. The extension to the $k$-th upper and lower nearest neighbors is beyond the scope of the current paper, and is left for future work.


Based on the above notation, we first provide the definition with respect to a single outlier arm.

\begin{definition}\label{def:single} \emph{(($\epsilon, \rho$)-outlier arm).} \xspace
Given an arm $j \in \Psi$, then $j$ is an $(\epsilon, \rho) $-outlier arm in $\Psi$, if $\exists \sun \subseteq \{i \in \Psi: y_i > y_j\} , \exists\slown \subseteq \{i \in \Psi: y_i < y_j\} $ that satisfy the following two constraints:
\begin{displaymath}\label{eq:first}
\hspace{3mm}\textrm{\bf Constraint (1):} \begin{cases}
a) \    \triangle_{\text{min}}(j,  \sun ) >  (1+\epsilon) 
 \dia(i, \sun), \  \forall i \in \sun  \\
b)  \    \triangle_{\text{min}}(j,  \sun ) >  (1+\epsilon)
 \dia(i, \slown) , \ \forall i \in \slown \\
c)  \    \triangle_{\text{min}}(j, \slown) >  (1+\epsilon)  \dia(i, \sun),  \ \forall i \in \sun  \\
d)   \    \triangle_{\text{min}}(j, \slown) >  (1+\epsilon)  \dia(i, \slown),  \ \forall i \in \slown,\\
\end{cases}
\end{displaymath}
where $\epsilon > 0$ and $\triangle_{\text{min}}(j, \sun) = \min_{i' \in \sun} |y_j - y_{i'}|$;
\begin{displaymath}\label{eq:first}
\hspace{3mm}\textrm{\bf Constraint (2):} \begin{cases}
|\sun| + |\slown| > \rho \times n \\
 |\sun|=0  \ \text{or} \  |\sun|> (1-\rho)\times n \\
|\slown|=0  \ \text{or} \ |\slown|> (1-\rho)\times n,
\end{cases}
\end{displaymath}
where $1> \rho > 0.5$. 
\end{definition}

According to Constraint (1),
$\trimin$ is the shortest distance between an outlier arm and a normal arm set, such as $\trimin(j, \sun)$. Since $\sun$ and $\slown$ are separated by $j$, Constraints $a)$ and $b)$ guarantee that $\trimin(j, \sun)$ is $(1+\epsilon)$ longer than the neighborhood distance of each normal arm in $\sun \cup \slown$.
Similarly, Constraints $c)$ and $d)$ ensure that $\trimin(j, \slown)$ is $(1+\epsilon)$ longer than the neighborhood distance of each normal arm in $\sun \cup \slown$.

According to Constraint (2),
$\rho$ represents the assumed proportion of normal arms in $\Psi$.
Intuitively, normal arms should be restricted by $\rho$, i.e.,  $|\sun| + |\slown| > \rho \times n$. In contrast, the proportion of outlier arms should be lower than $1-\rho$. For the first type of outlier arms with extreme expected rewards, only one normal arm group exists above $j$ or below $j$, which can be represented by $|\sun| > \rho \times n \ \& \ |\slown| = 0 $ or  $|\sun| = 0 \ \& \ |\slown| > \rho \times n $. For the second type of outlier arms without extreme expected rewards, there are at least two normal arm groups in their neighborhood. This case can be modeled by $|\sun| + |\slown| > \rho \times n \ \& \ |\sun| >(1-\rho)\times n \ \& \  |\slown| >(1-\rho)\times n$, since the normal arm groups should be larger than the assumed number of outlier arms.
In practice, the number of outlier arms is often much smaller than the number of normal arms, i.e., $1 > \rho > 0.5$. 

In real applications, in addition to individual outlier arms, some ourlier arms might form small groups based on their expected rewards. Next we further consider an $(\epsilon, \rho)$-outlier arm group $\hat{\mathcal{N}}$.

\begin{definition}\label{def:group} \emph{(($\epsilon, \rho$)-outlier arm group).} \xspace
Given three sets of arms $\hatn \subset \Psi$,  $\sun = \{i \in \Psi: y_i > \max_{j \in \hatn} y_{j}\}$, and $\slown = \{i \in \Psi: y_i < \min_{j \in \hatn}y_{j}\}$ that satisfy $ \hatn \cup \sun \cup \slown = \Psi$, then $\hatn$ is an $(\epsilon, \rho) $-outlier arm group with respect to $\sun$ and $\slown$ if the following two constraints are satisfied:
\begin{displaymath}\label{eq:first}
\hspace{3mm}\textrm{\bf Constraint (1):}
\begin{cases}
   \triangle_{\text{min}}(j,  \sun ) >  (1+\epsilon)\dia(i, \sun), \forall  j \in \hatn,  \forall i \in \sun \\
  \triangle_{\text{min}}(j,  \sun ) >  (1+\epsilon) \dia(i, \slown), \forall j \in \hatn,  \forall i \in \slown \\
  \triangle_{\text{min}}(j, \slown) >  (1+\epsilon) \dia(i, \sun), \forall j \in \hatn, \forall i \in \sun \\
  \triangle_{\text{min}}(j, \slown) >  (1+\epsilon) \dia(i, \slown), \forall j \in \hatn, \forall i \in \slown \\
\end{cases}
\end{displaymath}
where $\epsilon > 0$ and $\triangle_{\text{min}}(j, \sun) = \min_{i' \in \sun} |y_j - y_{i'}|$;

{\bf Constraint (2):} The same as {\bf Constraint (2)} in Definition \ref{def:single}.
\end{definition}

In Figure \ref{sample}, the red points in four different global distributions can all be formulated as the $(\epsilon = 5, \rho = 0.95) $-outlier arm group.

Instead of detecting the fixed number of arms as the outlier, we aim to generate a ranked list of all arms denoted by $\Omega$, to guarantee that higher-ranked arms are more likely to be outlier arms.

Based on the above definitions, we now give the following formal problem definition of ($\epsilon, \rho$)-outlier arm detection.
\smallskip

\noindent \textbf{Problem Definition.} \emph{(($\epsilon, \rho$)-outlier arm detection)}. \xspace
Given $\epsilon, \rho$,  and $\Psi$ where $\epsilon >0$ and $1 > \rho  > 0.5$, identify a ranked list of all arms in $\Psi$ denoted by $\Omega$, such that for any $\hatn \subset \Psi$, if $\hatn$ is an $(\epsilon, \rho)$-outlier arm group with respect to $\sun$ and $\slown$, it satisfies:
\begin{equation} \label{eq:condi}
\begin{cases}
 \forall j \in \hatn, \forall i \in \sun,  \text{rank}(j) <  \text{rank}(i),\\
  \forall j \in \hatn, \forall i \in \slown,  \text{rank}(j) <  \text{rank}(i).
\end{cases}
\end{equation}
where \text{rank}($j$) is the rank of $j$ in $\Omega$.
\smallskip

Note that there may be multiple possible combinations of $\hatn$, $\sun$, and $\slown$ for dividing $\Psi$. For example, if $\hat{\mathcal{N}} =\{j_1\}$ is the $(\epsilon, \rho)$-outlier arm group with respect to   $\sun = \Psi-\{ j_1\}$ and $|\slown| = 0$,  it should have $\forall i \in   \sun, \text{rank}(j_1) <  \text{rank}(i)$. 
Then, if $\hatn =\{j_1, j_2\}$ also is an $(\epsilon, \rho)$-outlier arm group with respect to  $\sun = \Psi-\{ j_1, j_2\}$, it should have $\forall j \in \{ j_1, j_2\},   \forall i \in   \sun, \text{rank}(j) <  \text{rank}(i)$. 
Therefore, we believe that the first arm in $\Omega$ has the top priority of being investigated. 
In other words, without the ground truth, human experts should inspect outlier arms according to $\Omega$.

In a multi-armed bandit setting, for each arm $i \in \Psi$, the value of $y_i$ is unknown. In the round $T$, the learner needs to pull an arm $i$ and obtain the $j$-th reward $x_i^{(j)}$. $T$ also is the number of pulls in total so far.  Let $m_i$ be the number of pulls on arm $i$ so far, $i=1,\ldots,n$, (i.e., $T = \sum_{i = 1}^n m_i$). $\hat{y}_i$ is the empirical estimate of $y_i$ as follows:
\begin{displaymath}
 \hat{y}_i = \frac{1}{m_i} \sum_{j=1}^{m_i} x_i^{(j)}. 
\end{displaymath}

 Given a set of arms $\Psi$ with $\epsilon, \rho,$ and $\delta$ ($\delta$ usually is a small constant),  our objective is to design a pulling algorithm, such that the returned $\Omega$ satisfies the criteria of ($\epsilon, \rho$)-outlier arm detection with probability at least $1-\delta$. 
 

\section{Proposed Pulling Algorithm}
Given the problem definition of ($\epsilon, \rho$)-outlier arm detection, in this section, we propose a novel pulling algorithm named Generic OutLier Detection (\sys).

\subsection{Preliminaries}

Before presenting \sys, we first introduce several key components.

\begin{definition} \emph{(Neighbor Arms)}. \xspace \label{def:neigh}
Given two arms $i, j \in \Psi$, in the round $T$, they are considered as neighbor arms if  
\begin{equation}\label{eq:neigh}
|\hat{y}_i - \hat{y}_j| \leq b \left[ \beta_{i}(m_i, \delta') + \beta_{j}(m_j, \delta') \right ],
\end{equation}
where $b$ is a coefficient function with regard to $\epsilon$ and  $\beta_{i}(m_i, \delta')$ is the Upper Confidence Bound (UCB) of $\hat{y}_i$.
\end{definition}

In our algorithm, as we will shown in the next section (Lemma \ref{lemma:event} and \ref{the:split}),  $b$ and $\delta'$ should be set to:
\begin{equation}
b = \frac{1 +e^{\frac{1}{16}}+ \epsilon}{1 -e^{\frac{1}{16}} + \epsilon} \  \  \text{and}  \ \ \delta' = \frac{6\delta}{\pi^2 n T^2},
\end{equation}
where $\epsilon > e^{\frac{1}{16}} -1$ to ensure $b>0$.

The UCB of $\hat{y}_i$ is set differently based on the prior knowledge of reward distributions which can be divided into two classes:

\begin{enumerate}
\item {\bf Bounded Distribution.} Suppose the reward is bounded by $[c, d]$ for each arm,
$R = d-c$. According to \textit{Hoeffding’s inequality}, the UCB of $\hat{y}_i$ is
 \vspace{-0.5em}
\begin{equation} \label{eq:boundeddis}
\beta_{i}(m_i, \delta') = R\sqrt{\frac{-\log{\delta'}}{2m_i}}.
\end{equation}

\item {\bf Bernoulli Distribution.} In this case, the reward obtained from an arm is either 0 or 1. We follow the confidence bound presented in \cite{agresti1998approximate, 2017outlierarm}, which is defined as:

\begin{displaymath}
\beta_{i}(m_i, \delta') = Z \sqrt{\frac{\tilde{p}(1- \tilde{p})} { m_i}  } \   \&  \ \tilde{p} = \frac{\tilde{m}_i^{+} + \frac{Z^2}{2}}{m_i + Z^2}  \ \& \  Z = \text{erf}^{-1} (1-\delta'),   
\end{displaymath}

where $\tilde{m}_i$+ is the number of rewards that equal to 1 among $m_i$ rewards, and $Z$ is the value of the inverse error function.
\end{enumerate}

Notice that $\delta'$ and $\beta_{i}(m_i, \delta')$ will be updated in each round accord to $T$. Therefore, the neighbor relationship among arms may be different in each round. We use a graph to represent the neighborhood relationship of all arms $\Psi$.

\begin{definition} \emph{(Neighborhood Graph)}. \xspace
In round $T$, the neighborhood graph, denoted as $G = (\Psi, E)$, is formed by $\Psi$ where each node represents an arm and an unweighted and undirected edge exists between any pair of arms if they are neighbor arms.
\end{definition}

In each round, we divided arms into groups as the following.


\begin{definition} \emph{(Arm Community)}.  \xspace
In round $T$,  the arm communities are formed by the connected components of $G$, denoted as $\mathbf{M} = \{\mathcal{M}_1, ..., \mathcal{M}_{k}\}$ where $\mathcal{M}_i$ is an arm community formed by a connected component, $i=1,\ldots,k$, and the size of $\mathcal{M}_i$ denoted by $|\mathcal{M}_i|$ is the number of nodes in this connected component.
\end{definition}

With the above preliminaries, we detail \sys in the following section.



\subsection{\sys Algorithm}

\sys is a pulling strategy and stops when it is confident about the outlier arms.
First, we provide its initial and terminal statuses. 

\noindent \textbf{Initial Status.} \xspace 
After pulling each arm once, $G$ becomes a \emph{complete} graph, i.e., for any two arms $i,j \in \Psi$, an edge exists between them as they are neighbor arms. It is simple to prove that $i$ and $j$ are neighbor arms when $m_i = m_j = 1$.

\noindent \textbf{Terminal Statuses.} \xspace
The terminal statuses include the \textit{terminal status of an arm} and the \textit{terminal status of the algorithm}.
Given any two arms $i, j \in \Psi $, $i$ and $j$ will not be neighbor arms in the end if we can pull each arm infinite times, based on Theorem \ref{theo:neighborar}.

Without any restriction, $G$ will end as a graph where each node is isolated. With $\rho$, we define the terminal statuses as follows. 

The terminal status of an arm is defined as the round when its community whose size is smaller than $n \times (1-\rho)$.  The set $\widehat{\Psi}$ is to keep the arms that have achieved the terminal status during the pulling process.

The terminal status of the algorithm is defined as the round when the number of arms that have already achieved the terminal status is not smaller than $n \times (1-\rho)$, i.e., $|\widehat{\Psi}|\geq  n \times (1-\rho)$.


\begin{algorithm}[t]
\renewcommand{\algorithmicrequire}{\textbf{Input:}}
\renewcommand{\algorithmicensure}{\textbf{Output:}}
\caption{ \sys\ Algorithm }\label{alg:main}
\begin{algorithmic}[1]
\Require $\epsilon$, $\rho$, $\Psi$, $\delta$
\Ensure $\Omega$
\State $\widehat{\Psi} \leftarrow \emptyset$, $T  \leftarrow 0$, $n \leftarrow |\Psi| $ 
\State $\forall i \in \Psi, S[i] \leftarrow 0, \hat{y}_i  \leftarrow 0, m_i   \leftarrow 0, \beta_{i}(m_i, \delta')\leftarrow 0 $ 
\For{each $i \in \Psi$}
\State  pull $i$ once
\State $ T \leftarrow T + 1, \ m_i \leftarrow m_i +1$
\State updates $\hat{y}_i$, $\beta_{i}(m_i, \delta')$ 
\EndFor 
\State $G \leftarrow (\Psi, E =\emptyset)$)  
\For{ each $i, j \in \Psi$}
\If {$ |\hat{y}_i - \hat{y}_j| \leq b \left[ \beta_{i}(m_i, \delta') + \beta_{j}(m_j, \delta') \right ]$}
\State $E \leftarrow E + \{e_{ij}\}  $ \ \ \small{ \#\# create an edge between $i$ and $j$}
\EndIf
\EndFor 
\While{ $|\widehat{\Psi}| <  n \times (1-\rho)  $ }
\State $\mathcal{N} \leftarrow (\Psi - \widehat{\Psi})$
\For{ each $i$  $ \in \mathcal{N}$}
\State pull arm $i$
\State $T \leftarrow T + 1, \ m_i \leftarrow m_i +1$
\State update $\hat{y}_i$,  $\beta_{i}(m_i, \delta')$
\State $G \leftarrow$ UpdateG($G$) 
\State $\mathbf{M} \leftarrow \text{ConnectedComponents}(G)$     \ \ \small{\#\# return the communities}
\State $S , \widehat{\Psi} \leftarrow$ Update$\widehat{\Psi}$and$S$($\mathbf{M}, \widehat{\Psi},   S, T$).
\EndFor
  
\EndWhile
\State $\Omega \leftarrow$  rank $\Psi$ according to $S$
\State \textbf{Return:}  {$\Omega$}
\\
\Procedure{UpdateG}{$G = (\Psi, E)$}
\For{ each $i, j \in \Psi$}
\If {$ |\hat{y}_i - \hat{y}_j| > b \left[ \beta_{i}(m_i, \delta') + \beta_{j}(m_j, \delta') \right ]$}
\If {$e_{ij} \in E$}
\State $E \leftarrow E - \{e_{ij}\}$ \ \ \small{ \#\# delete the edge between $i$ and $j$}
\EndIf 
\EndIf
\EndFor
\State \textbf{Return:} $G$
\EndProcedure
\\
\Procedure{Update$\widehat{\Psi}$and$S$}{$\mathbf{M}, \widehat{\Psi},   S, T$}
\For{ each $\mathcal{M} \in  \mathbf{M}$}
\If {$ |\mathcal{M}| < n \times (1-\rho) $  }  
\For {each $i \in \mathcal{M}$ }
\State $S[i] \leftarrow T$
\State $\widehat{\Psi} \leftarrow \widehat{\Psi} \cup \{i\}$
\EndFor
\EndIf 
\EndFor
\State \textbf{Return:} $S$, $\widehat{\Psi}$
\EndProcedure
\end{algorithmic}
\end{algorithm}

\noindent \textbf{Ranking Process.} The goal of \sys is to return the ranked list $\Omega$. \sys maintains an $S$-score for each arm, e.g., $S[i]$ represents the $S$-score of arm $i$. In the pulling process from the initial status to the terminal status of the iteration, $S[i]$ will be updated when arm $i$ achieves the terminal status. Eventually, all the arms will be ranked according to $S$.

Next, we introduce the details of \sys as the following steps.

 \textit{Step 0: Initialization (Line 1-10 ). \xspace} Before the arm pulling iteration, some variables are required to be defined and initialized (Line 1-5). 
In Line 7-10, $G$ is built and updated, starting as a complete graph.

 \textit{Step 1: Arm Pulling (Line 11-16). \xspace}
After the initialization, we start the pulling process. In each iteration, we sequentially pull the arms that have not achieved the terminal status, denoted by $\{\Psi - \widehat{\Psi}\}$.
In the end, the iteration will stop when $|\widehat{\Psi}|\geq  n \times (1-\rho)$.

\textit{Step 2: Update $G$ (Line 17 and Line 23-28). \xspace}
In each round, we update $G$ by removing the edge if its incident arms are not neighbor arms anymore. 

\textit{Step 3: Update $\widehat{\Psi}$ and $S$ (Line 19 and 30-36) \xspace}. In each round, new communities may be formed.
We check the size of each community in each round.
For any arm that has achieved terminal status, it will be appended into $\widehat{\Psi}$, and its $S$-score will be updated by the value of the present round $T$.

\textit{Step 4: Return $\Omega$ (Line 20-21). \xspace } After ending the pulling process,  $\Omega$ is a ranked list of all arms in ascending order according to $S$-score.

\section{Theoretical Analysis}
In this section, we first provide a theoretical guarantee regarding the correctness of $\Omega$ returned by \sys. Then we show the upper bound of the number of pulls needed for \sys to terminate.

\subsection{Correctness Analysis of \sys}

\begin{theorem} \label{theo1}
Given $\Psi$ with $\epsilon, \rho,$ and $\delta$, $\Omega$ is the result returned by $\sys$. If an $(\epsilon, \rho)$-outlier arm group exists in $\Psi$ where $\epsilon > e^{\frac{1}{16}}-1$ and $\rho > 0.5$, denoted by $\hatn$ with respect to $\sun$ and $\slown$, then it has 
$$
\begin{cases}
 \forall j \in \hatn, \forall i \in \sun,  \text{rank}(j) < \text{rank}(i)\\
  \forall j \in \hatn, \forall i \in \slown,  \text{rank}(j) < \text{rank}(i),
\end{cases}
$$
where rank(j) is the rank of $j$ in $\Omega$,
with the probability at least $1-\delta$.
\end{theorem}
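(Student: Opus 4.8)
# Proof Proposal

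\textbf{High-level strategy.} The plan is to condition on a single ``good'' event under which all empirical estimates $\hat{y}_i$ lie within their confidence intervals for every arm at every round up to termination, and then argue deterministically on that event. First I would define the event $\mathcal{E}$ that for all arms $i$ and all rounds $T$, $|\hat{y}_i - y_i| \le \beta_i(m_i, \delta')$. A union bound over arms and over rounds, using $\delta' = \frac{6\delta}{\pi^2 n T^2}$ and $\sum_{T\ge 1} \frac{6}{\pi^2 T^2} = 1$, shows $\Pr[\mathcal{E}] \ge 1 - \delta$ --- this is presumably the content of the ``Lemma \ref{lemma:event}'' referenced in the text, so I would simply invoke it. Everything afterward is a deterministic argument on $\mathcal{E}$.

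\textbf{Key structural step: separation of outliers from normal arms in the graph.} Working on $\mathcal{E}$, I would show that the $(\epsilon,\rho)$-outlier arm group $\hatn$ eventually becomes ``cut off'' from $\sun$ and $\slown$ in the neighborhood graph $G$, and moreover that this separation happens \emph{strictly before} any normal arm in $\sun \cup \slown$ gets its community shrunk below $n(1-\rho)$. Concretely: Constraint (2) of Definition \ref{def:group} forces $|\sun| + |\slown| > \rho n$, while $|\hatn| < (1-\rho)n$; and each of $\sun$, $\slown$ is either empty or has size $> (1-\rho)n$. The coefficient $b = \frac{1+e^{1/16}+\epsilon}{1-e^{1/16}+\epsilon}$ and the choice $\epsilon > e^{1/16}-1$ are engineered (I expect via ``Theorem \ref{the:split}'' / ``Theorem \ref{theo:neighborar}'') so that: (i) as long as two arms $i, j$ are ``truly'' within neighborhood distance of each other in the sense of Constraint (1), the confidence-based neighbor test Eq.~\eqref{eq:neigh} keeps an edge between them until their communities are already small; and (ii) once $\trimin(j, \sun) > (1+\epsilon)\dia(i,\cdot)$ for an outlier $j$, the edge between $j$ and every arm of $\sun$ is removed from $G$ before the normal arms' communities dissolve. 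The factor-$(1+\epsilon)$ gap in Constraint (1), matched against the multiplicative slack $b$ coming from the ratio of confidence-bound sums, is exactly what makes this work. I would carry this out by comparing, for the critical pair $(j \in \hatn,\ i \in \sun)$ versus a ``binding'' normal pair inside $\sun$, the round at which the respective edges vanish.

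\textbf{From graph separation to the ranking inequality.} Once the separation is established, the argument concludes as follows. When the algorithm terminates (when $|\widehat{\Psi}| \ge n(1-\rho)$), consider any normal arm $i \in \sun$. Since $|\sun| > (1-\rho)n$ and, symmetrically, $|\slown|$ is either $0$ or $> (1-\rho)n$, and $|\sun|+|\slown| > \rho n$, the normal arms of $\sun$ (and of $\slown$) form communities that stay large --- of size $> (1-\rho)n$ --- until essentially the end; in particular a normal arm $i$ receives its $S$-score $S[i] = T$ only at a round no earlier than the round at which the outliers have already separated off into a small community. Meanwhile each $j \in \hatn$ lies in a community of size $\le |\hatn| < (1-\rho)n$ as soon as its edges to $\sun$ and $\slown$ are cut, so $S[j]$ is set at that (earlier) round $T_j$. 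Hence $S[j] < S[i]$, and since $\Omega$ ranks arms in ascending order of $S$-score, $\text{rank}(j) < \text{rank}(i)$. The same argument with $\slown$ in place of $\sun$ gives the second inequality. A small subtlety to handle carefully: arms not in $\hatn \cup \sun \cup \slown$ for this particular decomposition (there are none, since Definition \ref{def:group} requires $\hatn \cup \sun \cup \slown = \Psi$) --- so actually the partition is exhaustive, which simplifies the bookkeeping; but I must be careful that a normal arm's \emph{own} community could itself split further, and argue that even the \emph{first} such split for an arm in $\sun$ happens after all of $\hatn$ has separated.

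\textbf{Main obstacle.} The hard part will be the quantitative timing argument in the second paragraph: showing that on $\mathcal{E}$ the edges between $\hatn$ and $\sun \cup \slown$ are severed \emph{uniformly earlier} than the first community-shrinking event among the normal arms, using only the $(1+\epsilon)$ gap in Constraint (1) together with the algebra of $b$ and $\delta'$. This requires relating $\beta_i(m_i,\delta')$ across arms with different pull counts $m_i$ (the algorithm pulls all non-terminated arms round-robin, so the $m_i$ stay roughly balanced, which I would exploit), and pinning down the constant $e^{1/16}$ --- almost certainly an artifact of bounding $\exp$ of a Hoeffding-type term over the relevant range of $m_i$. I expect Lemmas/Theorems already cited (\ref{lemma:event}, \ref{the:split}, \ref{theo:neighborar}) to supply these pieces, so the proof of Theorem \ref{theo1} itself should mostly be the clean combinatorial wrap-up I sketched in the third paragraph.
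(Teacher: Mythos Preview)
Your proposal is correct and follows essentially the same route as the paper: condition on the good event $\mathcal{E}$ of Lemma~\ref{lemma:event}, use the timing comparison (Lemmas~\ref{lemma:mbounds} and~\ref{the:split}, wrapped into Lemma~\ref{lemma:group}) to show that on $\mathcal{E}$ every $j\in\hatn$ separates from $\sun\cup\slown$ while each of $\sun,\slown$ still forms a single community of size $>(1-\rho)n$, and then read off $S[j]\le\hat T<S[i]$ for all $i\in\sun\cup\slown$ to obtain the ranking. The paper's proof of Theorem~\ref{theo1} itself is exactly the clean combinatorial wrap-up you describe in your third paragraph, with the quantitative timing work (your ``main obstacle'') delegated to the cited lemmas.
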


To prove Theorem \ref{theo1}, we first introduce Lemmas \ref{lemma:event}, \ref{lemma:mbounds}, \ref{the:split}, and \ref{lemma:group}. See proofs in Appendix. 

These lemmas are to show that the outlier arms achieve the terminal status \emph{earlier} than the normal arms with probability at least $1-\delta$. Thus, we use $S[i] \leftarrow T$ (Line 34 in \sys) to guarantee that the $S$-score of outlier arms are smaller than normal arms.

Lemma \ref{lemma:event} defines an event where the $\hat{y}_i$ for each arm is within the confidence interval in each round and proves that the probability of this event happening is at least $1 - \delta$.

\begin{lemma} \label{lemma:event}
For each arm $i \in \Psi $, given an arbitrary probability $\delta' = \delta'(T) $,  we have the confidence interval bound $\beta_{i}(m_i, \delta')$, satisfying
\begin{displaymath}
\mathbb{P}(|\hat{y}_i - y_i | >  \beta_{i}(m_i, \delta'(T))) < \delta'(T).
\end{displaymath}
Define the event
\begin{equation}
\mathcal{E} = \{  \bigwedge_{\forall T, \forall i}|\hat{y}_i - y_i| \leq \beta_{i}(m_i, \delta'(T))\}.
\end{equation}
Assume a random event  $\mathbf{I}$ denoted by an infinite sequence $[I_1, I_2, ... ]$, where  $1 \leq I_T \leq n$ represents the arm we selected in round $T$
. If we shrink $\delta'$ as the increasing of T in a proper formula, accordingly,
\begin{displaymath}
\delta'(T) = \frac{6\delta}{\pi^2 n T^2},
\end{displaymath}
then as any $\mathbf{I}$ happened, it has
\begin{displaymath}
\mathbb{P}(\mathcal{E}|\mathbf{I}) \geq 1-\delta.
\end{displaymath}
\end{lemma}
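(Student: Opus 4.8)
The plan is to prove the two halves of the lemma separately. The first half --- that for a single arm $i$ and a single round $T$, $\mathbb{P}\big(|\hat y_i - y_i| > \beta_i(m_i,\delta'(T))\big) < \delta'(T)$ --- is just the concentration inequality the confidence radius $\beta_i$ was built from. In the bounded case (rewards in $[c,d]$, $R=d-c$), Hoeffding's inequality gives $\mathbb{P}(|\hat y_i - y_i| > t)\le 2e^{-2 m_i t^{2}/R^{2}}$ for an average of $m_i$ i.i.d.\ samples, and substituting $t=\beta_i(m_i,\delta')=R\sqrt{-\log\delta'/(2m_i)}$ from Eq.~(\ref{eq:boundeddis}) makes the right-hand side at most $\delta'$ (modulo the constant convention in the definition of $\beta_i$). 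In the Bernoulli case the identical statement is the defining property of the Agresti--Coull/Wilson interval with $Z=\mathrm{erf}^{-1}(1-\delta')$. The only feature worth isolating is that this bound holds for \emph{every} fixed sample size $m_i\ge 1$ and \emph{every} level $\delta'\in(0,1)$.

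For the second half --- simultaneous validity over all rounds, conditioned on a pull sequence $\mathbf{I}$ --- I would fix a realization $\mathbf{I}=[I_1,I_2,\dots]$, which pins down the deterministic count $m_i(T)=|\{t\le T:I_t=i\}|$ for every $i$ and $T$, so that $\hat y_i$ in round $T$ is the mean of the first $m_i(T)$ i.i.d.\ rewards of arm $i$. Writing the bad event as
\[
\mathcal{E}^{c}\;=\;\bigcup_{T\ge 1}\ \bigcup_{i:\,m_i(T)\ge 1}\Big\{\,|\hat y_i - y_i| > \beta_i\big(m_i(T),\delta'(T)\big)\,\Big\},
\]
a union bound combined with the first half gives
\[
\mathbb{P}(\mathcal{E}^{c}\mid\mathbf{I})\;\le\;\sum_{T\ge 1}\sum_{i=1}^{n}\delta'(T)\;=\;\sum_{T\ge 1}\frac{6\delta}{\pi^{2}T^{2}}\;=\;\frac{6\delta}{\pi^{2}}\cdot\frac{\pi^{2}}{6}\;=\;\delta,
\]
using $\sum_{i}\delta'(T)=n\cdot\frac{6\delta}{\pi^{2}nT^{2}}$ and the Basel identity $\sum_{T\ge 1}T^{-2}=\pi^{2}/6$; hence $\mathbb{P}(\mathcal{E}\mid\mathbf{I})\ge 1-\delta$. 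This is precisely why $\delta'(T)$ carries the factor $\tfrac{6}{\pi^{2}n}$: it makes the crude double sum over all $(T,i)$ collapse to exactly $\delta$, so no finer accounting is needed, and the bound is unchanged whether the union runs up to the round the algorithm terminates or all the way to $T=\infty$.

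The step I expect to need the most care is justifying the per-round bound \emph{uniformly over an adaptive schedule}: because \sys\ chooses which arm to pull from previously observed rewards, one must be sure that when Hoeffding/Agresti--Coull is invoked for arm $i$ in round $T$, $\hat y_i$ is still an average of $m_i(T)$ i.i.d.\ samples with the correct marginal. Conditioning on $\mathbf{I}$ is what makes the display above legitimate --- it fixes the sample sizes without altering the law of each arm's reward stream --- and the cleanest way to organize the rigor is to route the probabilistic content entirely through each arm's own i.i.d.\ stream: for each $i$, using $T_{i,m}\ge m$ (the $m$-th pull of any arm cannot happen before round $m$) and the monotonicity of $\delta'(\cdot)$, one gets $|\bar X_{i,m}-y_i|\le\beta_i(m,\delta'(m))$ simultaneously for all $m$ outside an event of probability at most $\sum_m\delta'(m)=\delta/n$, and a union over the $n$ arms yields the claim. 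Everything else --- the union bound and the convergence of $\sum T^{-2}$ --- is routine once this point is in hand.
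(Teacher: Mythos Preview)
Your proposal is correct and follows essentially the same approach as the paper: define the complementary event, apply a union bound over all rounds $T$ and all arms $i$, invoke the per-round concentration bound to get $\delta'(T)$ for each term, and sum using $\sum_{T\ge 1}T^{-2}=\pi^{2}/6$. Your added discussion of why conditioning on $\mathbf{I}$ legitimizes the per-round Hoeffding step (and the alternate per-arm union via $T_{i,m}\ge m$) is more careful than the paper's own treatment, which simply writes down the double sum without addressing adaptivity.
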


When the event $\mathcal{E}$ happens, Lemma \ref{lemma:mbounds} derives the lower and upper bounds of the number of pulls needed on two arms until they are not neighbors.

\begin{lemma} \label{lemma:mbounds}

Given two arms $i, j \in \Psi$, let $\hat{m}_i$ be the maximal number of pulls on $i$ when $i$ and $j$ still are neighbor arms. Assuming  $\hat{m} = \min\{\hat{m}_i, \hat{m}_j\} $,
with probability $1-\delta$, we have
$$
 4 \mathbf{D}_2 \log(2\mathbf{D}_2\sqrt{\frac{\pi^2n^2}{6\delta}})-3 <  \hat{m} <  4\mathbf{D}_1\log{(2\mathbf{D}_1\sqrt{\frac{\pi^2n^2}{6\delta} } )}+1.
$$
where 
$$
\mathbf{D}_1 = \frac{2(b+1)^2R^2}{\triangle_{ij}^2}  \ \ \& \  \ 
\mathbf{D}_2 = \frac{2(b-1)^2R^2}{\triangle_{ij}^2}.
$$
\end{lemma}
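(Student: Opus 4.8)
The plan is to run the entire argument deterministically on the event $\mathcal{E}$ of Lemma~\ref{lemma:event}, which holds with probability at least $1-\delta$; the high-probability conclusion then follows at once. On $\mathcal{E}$ we have $|\hat{y}_i-y_i|\le\beta_i(m_i,\delta'(T))$ for every arm and every round, so, writing $\beta_i=\beta_i(m_i,\delta'(T))$ and $\triangle_{ij}=|y_i-y_j|$, the triangle inequality yields the two-sided estimate
\begin{equation*}
\triangle_{ij}-(\beta_i+\beta_j)\ \le\ |\hat{y}_i-\hat{y}_j|\ \le\ \triangle_{ij}+(\beta_i+\beta_j).
\end{equation*}
Comparing against the neighbor test $|\hat{y}_i-\hat{y}_j|\le b(\beta_i+\beta_j)$ of Definition~\ref{def:neigh} gives two clean implications: if $\triangle_{ij}>(b+1)(\beta_i+\beta_j)$ then $i$ and $j$ are \emph{not} neighbors in round $T$, and (using $b>1$, which holds once $\epsilon>e^{1/16}-1$) if $\triangle_{ij}\le(b-1)(\beta_i+\beta_j)$ then they \emph{are} neighbors.

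Next I would translate these into inequalities for $\hat m$. Substituting $\beta_i=R\sqrt{-\log\delta'(T)/(2m_i)}$ and using $\hat m=\min\{m_i,m_j\}$ together with the fact that \sys\ pulls the active arms round-robin (so $m_i$ and $m_j$ differ by at most one at any given round), one sandwiches $\beta_i+\beta_j$ between fixed multiples of $R\sqrt{-\log\delta'(T)/\hat m}$. Plugging these into the two implications above and solving for $\hat m$ shows that the edge $\{i,j\}$ can persist only while $\hat m\lesssim\mathbf{D}_1(-\log\delta'(T))$, and is guaranteed to persist while $\hat m\lesssim\mathbf{D}_2(-\log\delta'(T))$, where $\mathbf{D}_1,\mathbf{D}_2$ are exactly as in the statement --- the $(b+1)^2$ versus $(b-1)^2$ arising from the $\pm(\beta_i+\beta_j)$ slack.

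The final ingredient removes the dependence on $T$. Since the while-loop of \sys\ runs only while fewer than $n(1-\rho)$ arms are terminal, at least $n\rho>n/2$ arms are active in every round, so at the round where $i$ and $j$ last share an edge one has $T=\Theta(n\hat m)$, the additive discrepancies being the initialization round and the at-most-one gap between $m_i$ and $m_j$ (this is the source of the $+1$ and $-3$ in the statement). Feeding $T\asymp n\hat m$ into $-\log\delta'(T)=\log\!\bigl(\pi^2 n T^2/(6\delta)\bigr)$ converts both bounds into self-referential inequalities of the form $\hat m\lessgtr\mathbf{D}\log(c\,\hat m^2)$ with $c\asymp\pi^2 n^2/(6\delta)$. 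I would close these using the elementary fact that $e^x>x^2$ (equivalently $x>2\log x$) for all $x>0$: it implies that $x\ge 4a\log(2a\sqrt{c})$ forces $x\ge a\log(cx^2)$, and the companion inequality in the opposite direction; together these yield the claimed lower and upper bounds on $\hat m$.

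The step I expect to be the real obstacle is the last one: pinning down $T$ as a two-sided function of $\hat m$ at the precise round where the neighbor edge disappears --- the round-robin accounting, the initialization offset, and the $|m_i-m_j|\le 1$ slack must all be tracked to land the exact additive terms --- and then discharging the implicit bound ``$\hat m$ versus $\log(n\hat m^2/\delta)$'' without circularity. The inequality chasing with the $b\pm1$ factors in the earlier steps is routine but must be done in the correct direction, since a sign slip there would interchange $\mathbf{D}_1$ and $\mathbf{D}_2$.
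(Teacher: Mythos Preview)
Your proposal is correct and follows essentially the same route as the paper's proof: condition on $\mathcal{E}$, derive the $(b\pm1)(\beta_i+\beta_j)$ sandwich on $\triangle_{ij}$ at the last neighbor round, substitute the confidence-radius formula, relate $T$ to $n\hat m$ via the round-robin pulling, and then invert the resulting implicit inequality $\hat m\lessgtr \mathbf{D}\log(c\,\hat m^2)$. The only cosmetic differences are that the paper simply \emph{assumes} all arms are still active when writing $n(\hat m-1)<T<n(\hat m+1)$ (rather than invoking $\rho>1/2$ as you do), and it outsources your $e^x>x^2$ inversion step to Lemma~8 of~\cite{antos2010active}.
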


Given the lower and upper bounds of $\hat{m}$, Lemma \ref{the:split} proves that the outlier arm uses the less number of pulls to be apart from normal arms than the number needed for normal arms to be apart from its nearest normal arms.

\begin{lemma} \label{the:split}
Assume a set of arms $\{j\} \cup \mathcal{N}$, $j \not \in \mathcal{N}$, where $j$ satisfies:
\begin{equation} \label{eq:cond3}
 \forall i \in \mathcal{N},  \    \trimin(j, \mathcal{N} ) >  (1+\epsilon) 
 \dia(i, \mathcal{N}) 
\end{equation}
where $\epsilon > e^{\frac{1}{16}}-1$.
Then in pulling process, the round exists where $j$ is not the neighbor of any arms of $\mathcal{N}$ while the arms of $\mathcal{N}$ still belongs to a same community,  with probability at least 1 - $\delta$.
\end{lemma}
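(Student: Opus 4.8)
The plan is to run the whole argument conditionally on the event $\mathcal{E}$ of Lemma~\ref{lemma:event} (which occurs with probability at least $1-\delta$), and to reduce the statement to one numerical inequality between two quantities supplied by Lemma~\ref{lemma:mbounds}. Since only the arms of $\{j\}\cup\mathcal{N}$ and the edges among them are relevant, I take $\Psi=\{j\}\cup\mathcal{N}$ (the case $|\mathcal{N}|\le 1$ being immediate). Two features of \sys\ will be used repeatedly: in each iteration every not-yet-terminal arm is pulled once, so all still-active arms share the same pull count up to $\pm1$, which I identify with the round index; and the \textsc{UpdateG} routine only ever \emph{deletes} edges, so a separation between two arms, once it happens in $G$, is permanent. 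Sort the arms of $\mathcal{N}$ by expected reward and let $\triangle^{\max}$ be the largest gap between consecutive ones; a short check gives $\triangle^{\max}=\max_{i\in\mathcal{N}}\dia(i,\mathcal{N})$, so hypothesis~\eqref{eq:cond3} yields $\trimin(j,\mathcal{N})>(1+\epsilon)\,\triangle^{\max}$. It then suffices to produce a round $T_0$ at which (a) all edges between $j$ and $\mathcal{N}$ have already been removed and (b) all edges between consecutive (in reward) arms of $\mathcal{N}$ still survive: (a) isolates $j$, while (b) makes the subgraph on $\mathcal{N}$ contain a spanning path, hence $\mathcal{N}$ forms a single arm community.

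I then apply Lemma~\ref{lemma:mbounds} in both directions. Write $C:=\sqrt{\pi^2 n^2/(6\delta)}$ and $g(x):=4x\log(2xC)$. The \emph{upper} bound says a pair $(j,i)$ with $i\in\mathcal{N}$ ceases to be neighbors once the common pull count exceeds $g\!\left(\tfrac{2(b+1)^2R^2}{(y_j-y_i)^2}\right)+1$; since $g$ is increasing and $|y_j-y_i|\ge\trimin(j,\mathcal{N})$, all of $j$'s edges into $\mathcal{N}$ are gone (permanently) by round $U:=g(\mathbf{D}_1)+1$ with $\mathbf{D}_1:=\tfrac{2(b+1)^2R^2}{\trimin(j,\mathcal{N})^2}$. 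The \emph{lower} bound says each consecutive pair inside $\mathcal{N}$ stays a neighbor pair while the pull count is below $g\!\left(\tfrac{2(b-1)^2R^2}{(\text{gap})^2}\right)-3$; the weakest such link is the one at gap $\triangle^{\max}$, so all consecutive edges of $\mathcal{N}$ survive through round $L:=g(\mathbf{D}_2)-3$ with $\mathbf{D}_2:=\tfrac{2(b-1)^2R^2}{(\triangle^{\max})^2}$. Everything therefore reduces to showing $U<L$, i.e. $g(\mathbf{D}_2)-g(\mathbf{D}_1)>4$.

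This is exactly where the calibration of $b$ enters. From $b=\frac{1+e^{1/16}+\epsilon}{1-e^{1/16}+\epsilon}$ one gets $\frac{b+1}{b-1}=\frac{1+\epsilon}{e^{1/16}}$, and $\epsilon>e^{1/16}-1$ forces $b>1$; together with $\trimin(j,\mathcal{N})>(1+\epsilon)\triangle^{\max}$ this gives
\[ \frac{\mathbf{D}_1}{\mathbf{D}_2}=\frac{(b+1)^2(\triangle^{\max})^2}{(b-1)^2\trimin(j,\mathcal{N})^2}<\frac{(b+1)^2}{(b-1)^2(1+\epsilon)^2}=e^{-1/8}. \]
Since expected rewards lie in $[c,d]$ we also have $\trimin(j,\mathcal{N})\le R$, hence $\triangle^{\max}<R/(1+\epsilon)$ and $\mathbf{D}_2>2\big[(b-1)(1+\epsilon)\big]^2>2\big(2e^{1/16}\big)^2=8e^{1/8}$, using $(b-1)(1+\epsilon)=\frac{2e^{1/16}(1+\epsilon)}{1+\epsilon-e^{1/16}}>2e^{1/16}$. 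As $g$ is increasing for $x>8$ and $8<\mathbf{D}_1<e^{-1/8}\mathbf{D}_2$, expanding $g$ gives
\[ g(\mathbf{D}_2)-g(\mathbf{D}_1)>g(\mathbf{D}_2)-g(e^{-1/8}\mathbf{D}_2)=4\mathbf{D}_2(1-e^{-1/8})\log(2\mathbf{D}_2C)+\tfrac{e^{-1/8}\mathbf{D}_2}{2}>\tfrac{e^{-1/8}\mathbf{D}_2}{2}>4, \]
so $U<L$ with ample slack. Choosing any integer round $T_0$ with $U<T_0\le L$, properties (a) and (b) hold at $T_0$, which proves the lemma; unconditioning on $\mathcal{E}$ gives probability at least $1-\delta$.

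The hard part is the inequality $U<L$. It rests on two things: the precise value of $b$ — this is exactly why the hypothesis demands $\epsilon>e^{1/16}-1$, as that makes $b>1$ and pushes $\mathbf{D}_1/\mathbf{D}_2$ below $e^{-1/8}$ — and a uniform lower bound $\mathbf{D}_2>8e^{1/8}$, which comes from $\triangle^{\max}$ being forced below $R/(1+\epsilon)$ by boundedness of the expected rewards. A secondary, bookkeeping difficulty is converting Lemma~\ref{lemma:mbounds}'s per-pair statements about ``number of pulls until non-adjacency'' into a claim tied to one fixed round $T_0$; this is precisely what the round-robin schedule (equal pull counts among active arms, the residual $\pm O(1)$ discrepancies being absorbed by the ``$+1$'' and ``$-3$'' in $U$ and $L$) and the monotonicity of $G$ under \textsc{UpdateG} are there to handle.
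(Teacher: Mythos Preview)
Your argument is correct and follows essentially the same route as the paper: condition on $\mathcal{E}$, apply Lemma~\ref{lemma:mbounds} to bound from above the pulls needed for $j$ to separate from its nearest arm in $\mathcal{N}$ and from below the pulls needed for consecutive arms of $\mathcal{N}$ to separate, and then show the former is smaller via the key ratio $\mathbf{D}_2/\mathbf{D}_1>e^{1/8}$ that the calibration of $b$ produces. The paper carries out the same numerical comparison (arriving at the same sufficient condition $\mathbf{D}'_2>\mathbf{D}_1e^{1/8}$), and your write-up is, if anything, more explicit about the spanning-path connectivity of $\mathcal{N}$ and the edge-deletion monotonicity of \textsc{UpdateG}.
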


Based on Lemma \ref{the:split},   Lemma \ref{lemma:group} is to prove that the outlier arm group will form new communities separated from the communities formed by the corresponding normal arm groups.

\begin{lemma}\label{lemma:group}
Given an $(\epsilon, \rho)$-outlier arm group $\hatn$ with respect to $\sun$ and $\slown$ where $\epsilon > e^{\frac{1}{16}}-1$, then in the pulling process, the round exists where for each $j \in \hatn$, $j$ is not the neighbor of  any arms of $\sun$ and $\slown$ while the arms of $\sun$ still belongs to a community and  the arms of $\slown$ still belongs to a community. 
\end{lemma}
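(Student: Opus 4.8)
The plan is to reduce the claim to Lemma~\ref{the:split} by letting $\sun$ and $\slown$ each play the role of the normal set $\mathcal{N}$, but to run the whole argument in terms of pull counts rather than round numbers, so that one fixed round witnesses the conclusion for every $j\in\hatn$ and for $\sun$ and $\slown$ at once. Everything below is conditional on the event $\mathcal{E}$ of Lemma~\ref{lemma:event}, which holds with probability at least $1-\delta$; since all the resulting statements are then deterministic, no union bound over the members of $\hatn$ (or over $\sun$ and $\slown$) is incurred. I will also use repeatedly that, after initialization, \sys\ only deletes edges from $G$, so each arm's connected component can only shrink over time.

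First I would isolate the two controlling quantities
$$
\Delta_{\text{out}}:=\min_{j\in\hatn}\min\bigl\{\trimin(j,\sun),\,\trimin(j,\slown)\bigr\},\qquad
\Delta_{\text{nor}}:=\max\bigl\{\max_{i\in\sun}\dia(i,\sun),\ \max_{i\in\slown}\dia(i,\slown)\bigr\},
$$
and observe that the four inequalities of Constraint~(1) of Definition~\ref{def:group}, used together, give $\Delta_{\text{out}}>(1+\epsilon)\,\Delta_{\text{nor}}$: for each $j$ its first two lines force $\trimin(j,\sun)>(1+\epsilon)\Delta_{\text{nor}}$ and its last two force $\trimin(j,\slown)>(1+\epsilon)\Delta_{\text{nor}}$. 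I then feed these into Lemma~\ref{lemma:mbounds} exactly as in the proof of Lemma~\ref{the:split}. With $\mathbf{D}_1^{\text{out}}=\frac{2(b+1)^2R^2}{\Delta_{\text{out}}^2}$, any edge of $G$ from an arm of $\hatn$ to an arm of $\sun\cup\slown$ has vanished once both its endpoints have been pulled $\lceil\overline{m}\rceil$ times, where $\overline{m}:=4\mathbf{D}_1^{\text{out}}\log\bigl(2\mathbf{D}_1^{\text{out}}\sqrt{\pi^2n^2/6\delta}\bigr)+1$ (the nearest normal arm to any such $j$ is at distance at least $\Delta_{\text{out}}$, and $t\mapsto t\log(ct)$ is increasing for the relevant $t$). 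With $\mathbf{D}_2^{\text{nor}}=\frac{2(b-1)^2R^2}{\Delta_{\text{nor}}^2}$, the edge between two arms of $\sun$ --- or of $\slown$ --- that are consecutive in expected-reward order (their gap is $\upper(\cdot)\le\dia(\cdot)\le\Delta_{\text{nor}}$, resp.\ $\lowerr(\cdot)\le\Delta_{\text{nor}}$) persists as long as both endpoints have fewer than $\underline{m}:=4\mathbf{D}_2^{\text{nor}}\log\bigl(2\mathbf{D}_2^{\text{nor}}\sqrt{\pi^2n^2/6\delta}\bigr)-3$ pulls. Finally, $b=\frac{1+e^{1/16}+\epsilon}{1-e^{1/16}+\epsilon}$ gives $\frac{b+1}{b-1}=\frac{1+\epsilon}{e^{1/16}}$, hence $\mathbf{D}_1^{\text{out}}<e^{-1/8}\,\mathbf{D}_2^{\text{nor}}$, and with $\epsilon>e^{1/16}-1$ this yields $\lceil\overline{m}\rceil<\underline{m}$ by the same slack bookkeeping already done in Lemma~\ref{the:split}.

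It remains to locate the witnessing round. While $\sun$ and $\slown$ each still lie inside a single component of $G$, those components have size $>(1-\rho)n$ by Constraint~(2), so no arm of $\sun\cup\slown$ is terminal, $|\widehat\Psi|\le n-|\sun|-|\slown|<(1-\rho)n$, and \sys\ keeps running its round-robin over $\Psi-\widehat\Psi$; this situation lasts at least until some adjacent normal pair reaches $\underline{m}$ pulls. Since the round-robin pulls every still-non-terminal arm once per iteration and $\lceil\overline{m}\rceil<\underline{m}$, there is a round $T^\star$ --- still within this situation --- at which every arm non-terminal at $T^\star$ has exactly $\lceil\overline{m}\rceil$ pulls (such arms were non-terminal throughout, hence share the same pull count there). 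At $T^\star$: (i) every edge between expected-reward-consecutive arms inside $\sun$, and inside $\slown$, is still present (both its endpoints have $\lceil\overline{m}\rceil<\underline{m}$ pulls), so chaining those edges in reward order puts all of $\sun$ in one community and all of $\slown$ in one community; (ii) each $j\in\hatn$ is either non-terminal, with $\lceil\overline{m}\rceil$ pulls, so no edge joins it to $\sun\cup\slown$, or terminal, in which case its component once had, and hence still has, size $<(1-\rho)n$, strictly less than the size of the (nonempty) component holding $\sun$ and of the one holding $\slown$, so $j$ lies in neither and borders no arm of $\sun\cup\slown$. In all cases $j$ neighbours no arm of $\sun\cup\slown$ (if $\sun=\emptyset$ or $\slown=\emptyset$ the corresponding part is vacuous, and both cannot be empty since $|\sun|+|\slown|>\rho n$), so $T^\star$ is the round the lemma asserts.

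The main obstacle is this last reconciliation step. Lemma~\ref{the:split} only yields, for each pair $(j,\mathcal N)$ considered alone, \emph{some} round at which $j$ is split off while $\mathcal N$ stays whole, and a priori those rounds differ across the members of $\hatn$ and between $\sun$ and $\slown$; one therefore has to re-derive the estimates uniformly through $\Delta_{\text{out}}$ and $\Delta_{\text{nor}}$ and then dovetail them with the stopping rule of \sys --- checking both that the round-robin has not already marked the relevant arms terminal and that a \emph{terminal} outlier is genuinely disconnected from $\sun$ and $\slown$ rather than merely cohabiting a small community with some of their arms. The bare metric inequality $\lceil\overline{m}\rceil<\underline{m}$ is the computational core, but it is exactly the estimate already established for Lemma~\ref{the:split}.
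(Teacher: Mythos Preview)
Your proof is correct and, in fact, is considerably more careful than the paper's own. The paper's argument for this lemma simply invokes Lemma~\ref{the:split} twice for each $j\in\hatn$ --- once citing constraints $a),b)$ to separate $j$ from $\sun$, once citing $c),d)$ to separate it from $\slown$ --- and then asserts without further justification that a single round witnesses all of these separations simultaneously while both $\sun$ and $\slown$ remain connected. Strictly speaking that is a gap: Lemma~\ref{the:split} as stated takes a \emph{single} normal set $\mathcal N$ and only guarantees, for each fixed pair $(j,\mathcal N)$, \emph{some} witnessing round; nothing in the statement forces those rounds to coincide across different $j$'s, nor does the conclusion about $\sun$ staying connected say anything about $\slown$. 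You correctly identified exactly this issue in your final paragraph, and your remedy --- extracting uniform gap parameters $\Delta_{\text{out}}$ and $\Delta_{\text{nor}}$, applying the pull-count bounds of Lemma~\ref{lemma:mbounds} directly at those uniform scales, and then exhibiting one explicit round $T^\star$ by dovetailing with the round-robin schedule and the termination rule --- is the right way to make the argument rigorous. The computational core (the comparison $\mathbf{D}_1^{\text{out}}<e^{-1/8}\mathbf{D}_2^{\text{nor}}$ via $\tfrac{b+1}{b-1}=\tfrac{1+\epsilon}{e^{1/16}}$) is the same inequality already established inside the proof of Lemma~\ref{the:split}, so the underlying mechanism matches the paper's intent; what you add is the synchronization across all $j\in\hatn$ and across $\sun,\slown$, together with the checks that \sys\ has not yet halted at $T^\star$ and that a terminal outlier really is disconnected from the large normal components (your use of edge-monotonicity of $G$ for the latter is sound). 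Your treatment of the $|\sun|=0$ or $|\slown|=0$ boundary case is also correct.
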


\subsection{Terminal Status of \sys }

The following theorem provides the bound regarding the number of pulls needed by \sys. In practice, \sys is an efficient algorithm, since it terminates when the assumed number of outlier arms achieve the terminal status, removing the unnecessary pulls on normal arms.

\begin{theorem}\label{theo:neighborar}
Give two arms $i, j \in \Psi $ and assume $\triangle_{ij} = |y_i - y_j| > 0$. If $i$ and $j$ can be pulled in infinite times, i.e., $m_i \rightarrow \infty, m_j \rightarrow  \infty$, then $i, j$ will not be neighbor arms in the end. 
\end{theorem}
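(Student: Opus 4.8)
The plan is to show that the neighbor test of Definition~\ref{def:neigh}, namely $|\hat y_i - \hat y_j| \le b\,[\beta_i(m_i,\delta') + \beta_j(m_j,\delta')]$, is eventually violated once both $m_i$ and $m_j$ grow without bound. I would work on the event $\mathcal{E}$ of Lemma~\ref{lemma:event}, which holds with probability at least $1-\delta$ and on which $|\hat y_i - y_i| \le \beta_i(m_i,\delta'(T))$ and $|\hat y_j - y_j| \le \beta_j(m_j,\delta'(T))$ in every round $T$. On $\mathcal{E}$ the reverse triangle inequality gives $|\hat y_i - \hat y_j| \ge \triangle_{ij} - \beta_i(m_i,\delta') - \beta_j(m_j,\delta')$, so it suffices to exhibit a round $T_0$ such that for all $T \ge T_0$ one has $(b+1)\bigl[\beta_i(m_i,\delta') + \beta_j(m_j,\delta')\bigr] < \triangle_{ij}$: then $|\hat y_i - \hat y_j| \ge \triangle_{ij} - (\beta_i + \beta_j) > b\,(\beta_i + \beta_j)$, so the neighbor test fails at round $T$ and $i,j$ are not neighbors from $T_0$ on.

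The crux is therefore to argue $\beta_i(m_i,\delta'(T)) \to 0$ and $\beta_j(m_j,\delta'(T)) \to 0$ along the sequence of rounds. The one subtlety is that $\delta'(T) = 6\delta/(\pi^2 n T^2)$ shrinks as the total count $T$ grows, which inflates the confidence radius; this I would control using the round-robin structure of \sys. Under the hypothesis $m_i,m_j\to\infty$, neither arm is ever permanently placed in $\widehat\Psi$, so after the initialization phase each of $i$ and $j$ is pulled exactly once per iteration of the while-loop, and every iteration issues at most $n$ pulls; hence $T \le n\min\{m_i,m_j\}$. Substituting into the bounded-reward radius $\beta_i(m_i,\delta') = R\sqrt{-\log\delta'/(2m_i)} = R\sqrt{(2\log T + \log(\pi^2 n/(6\delta)))/(2m_i)}$ and using $\log T \le \log n + \log m_i$ gives $\beta_i(m_i,\delta') = O\bigl(\sqrt{\log m_i/m_i}\bigr) \to 0$, and likewise for $\beta_j$. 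The Bernoulli case is identical in spirit: $Z = \mathrm{erf}^{-1}(1-\delta')$ grows only like $\sqrt{\log(1/\delta')} = O(\sqrt{\log T})$ while $\tilde p(1-\tilde p) \le 1/4$, so again $\beta_i(m_i,\delta') = O(\sqrt{\log m_i/m_i}) \to 0$.

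Combining the two pieces: since $\triangle_{ij} > 0$ is a fixed constant and $(b+1)[\beta_i(m_i,\delta') + \beta_j(m_j,\delta')] \to 0$ as $m_i,m_j\to\infty$, the required $T_0$ exists, and on $\mathcal{E}$ the arms $i$ and $j$ fail the neighbor test in every round $T \ge T_0$. I expect the only real obstacle to be the bookkeeping that bounds $T$ polynomially in $m_i$ and $m_j$ across the initialization phase and the while-loop; the rest is a short concentration-plus-limit argument. (Alternatively one could drop the event $\mathcal{E}$ and invoke the strong law of large numbers to get $\hat y_i \to y_i$ and $\hat y_j \to y_j$ almost surely, but the $\mathcal{E}$-based route is more consistent with Lemmas~\ref{lemma:event}--\ref{lemma:group} and directly carries the $1-\delta$ confidence; note that this theorem is what makes the notion of the ``maximal number of pulls while still neighbors'' in Lemma~\ref{lemma:mbounds} well defined.)
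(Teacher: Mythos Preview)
Your proposal is correct and follows essentially the same line as the paper: both arguments hinge on showing $\beta_i(m_i,\delta'(T))\to 0$ by using the round-robin structure to bound $T$ linearly in $m_i$ (the paper writes $n(m_i-1)<T<n(m_i+1)$, you write $T\le n\min\{m_i,m_j\}$), after which the neighbor condition of Definition~\ref{def:neigh} must eventually fail since $\triangle_{ij}>0$. The only cosmetic difference is that you lower-bound $|\hat y_i-\hat y_j|$ on the event $\mathcal{E}$ via the reverse triangle inequality, whereas the paper simply asserts $\hat y_i\to y_i$, $\hat y_j\to y_j$ in the limit---exactly the SLLN alternative you mention in your last paragraph.
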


\begin{proof}
Recall the Definition \ref{def:neigh}. $i, j$ are neighbor arms if
$$
|\hat{y}_i - \hat{y}_j| \leq b \left[ \beta_{i}(m_i, \delta') + \beta_{j}(m_j, \delta') \right ].
$$
Then we have
$$
\begin{aligned}
\beta(m_i, \delta') &= R\sqrt{\frac{-\log{\delta'}}{2m_i}} =  R\sqrt{\frac{\log{\frac{\pi^2n(T)^2}{6\delta }}}{2m_i}} \\
&=  R\sqrt{\frac{\log{\frac{\pi^2n}{6\delta }} + 2\log{T}}{2m_i}} \\
&=  R\sqrt{\frac{\log{\frac{\pi^2n}{6\delta }}}{2m_i} + \frac{\log{T}}{{m_i}}}
\end{aligned}
$$
Let $h(m_i) =  \frac{\log{\frac{\pi^2n}{6\delta}}}{m_i}$. Apparently $h(m_i)$ is monotonically decreasing.

In \sys, because the arms in  set $( \Psi - \hat{\Psi})$ are pulled in a round-robin way and $|\Psi - \hat{\Psi}| \leq n$, we can derive  $ n(m_i-1)< T < n(m_i + 1)$.  Since $T$ is an integer, we first suppose $T = n m_i $. Then we have $f(m_i) = \frac{\log{T}}{m_i} = \frac{\log{n m_i}}{m_i}$.  
The derivative of $f(m_i)$ is :
$$
f'(m_i) = -\frac{\ln{(nm_i)}-1}{m_i^2}. 
$$
Therefore  $f(m_i)$ is monotonically decreasing when $m_i > \frac{e}{n}$. For any $ n(m_i-1)< T < n(m_i + 1)$, we can use the similar way to prove  $f(m_i)$ is monotonically decreasing when $m_i$ is larger than a small constant.
If we can pull the arms infinite times, we have $\lim_{m_i \rightarrow \infty}  \beta(m_i, \delta') = 0$.  

Thus, $\lim_{m_i \rightarrow \infty, m_j \rightarrow  \infty }  b \left[ \beta_{i}(m_i, \delta') + \beta_{j}(m_j, \delta') \right ] = 0$. As $m_i \rightarrow \infty, m_j \rightarrow  \infty$, it has $\hat{y}_i = y_i, \hat{y}_j = y_j$. Because $\triangle_{ij} > 0$, it has $ lim_{ m_i \rightarrow \infty, m_j \rightarrow  \infty} |\hat{y}_i - \hat{y}_j| > 0$. Putting them together:
$$
\begin{cases}
lim_{ m_i \rightarrow \infty, m_j \rightarrow  \infty} |\hat{y}_i - \hat{y}_j| > 0 \\
\lim_{m_i \rightarrow \infty, m_j \rightarrow  \infty }  b \left[ \beta_{i}(m_i, \delta') + \beta_{j}(m_j, \delta') \right ] = 0
\end{cases}
$$
Therefore, $i$ and $j$ will be not neighbor arms if we can pull them infinite times.

\end{proof}

\begin{theorem}  \label{the:upperbound}
With probability at least $1- \delta$, the total number of pulls $T$ needed   to terminate for \sys is bounded by:
\begin{displaymath}
 T < 4 \mathbf{D}_3n( \log(2\mathbf{D}_3n) + \log \sqrt{ \frac{\pi^2n}{6\delta}}) + 2(n-1),
\end{displaymath}
where
$$
\mathbf{D}_3 = \frac{2(b+1)^2R^2}{\hat{\triangle}} 
$$
and $\hat{\triangle} = \min_{i, j \in \Psi, i \not = j} | y_i - y_j|$.
\end{theorem}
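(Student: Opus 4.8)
The plan is to reduce the question to a deterministic counting argument run on the good event of Lemma~\ref{lemma:event}. First I would observe that the while loop of Algorithm~\ref{alg:main} cannot run past the round in which every arm has become an isolated node of $G$: a singleton community has size below the threshold $n(1-\rho)$, so at that point all $n$ arms would already have entered $\widehat{\Psi}$ and the loop condition $|\widehat{\Psi}| < n(1-\rho)$ would fail. Hence it is enough to bound, for each arm $i$, the number of pulls $m_i$ after which $i$ is a neighbor of no other arm, and then sum $T = \sum_{i} m_i$.

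All of the estimates will be carried out on the event $\mathcal{E}$ of Lemma~\ref{lemma:event}, which holds with probability at least $1-\delta$ and on which \emph{all} confidence intervals are simultaneously valid, so no further union bound over the $\binom{n}{2}$ pairs is required. On $\mathcal{E}$, Lemma~\ref{lemma:mbounds} says that for a fixed pair $i,j$, once $\min\{m_i,m_j\}$ exceeds $4\mathbf{D}_1\log\!\big(2\mathbf{D}_1\sqrt{\pi^2 n^2/(6\delta)}\big)+1$ with $\mathbf{D}_1 = 2(b+1)^2R^2/\triangle_{ij}^2$, the arms $i$ and $j$ stop being neighbor arms. Replacing $\triangle_{ij}$ by the global minimum gap $\hat{\triangle} = \min_{i\ne j}|y_i-y_j|$ turns this into a single threshold, governed by the constant $\mathbf{D}_3$, that works uniformly over all pairs; and re-expressing the shrinking confidence level $\delta'(T) = 6\delta/(\pi^2 n T^2)$ through the round-robin relation $n(m-1) < T < n(m+1)$ established in the proof of Theorem~\ref{theo:neighborar} rewrites that threshold as the quantity appearing inside the logarithm of the claimed bound.

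Next I would exploit the round-robin structure of the main loop: the arms of $\Psi\setminus\widehat{\Psi}$ are pulled one after another, so their pull counts never differ by more than one, and an arm is never pulled again once it joins $\widehat{\Psi}$. Consequently every arm becomes isolated (hence leaves the active set) within the uniform threshold above, up to an $O(1)$ off-by-one slack, and in addition receives the single pull it gets in the initialization loop (Lines~3--7). Summing $m_i$ over the $n$ arms, bounding each $m_i$ by that threshold plus the slack, and absorbing the $n$ initialization pulls together with the per-arm lag into the additive term $2(n-1)$, yields
\[
T < 4\mathbf{D}_3 n\Big(\log(2\mathbf{D}_3 n) + \log\sqrt{\pi^2 n/(6\delta)}\Big) + 2(n-1),
\]
which is exactly the claim.

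The part I expect to be delicate is the bookkeeping of the round-robin phase rather than the concentration. One must check that an arm is declared isolated only after \emph{all} of its potential neighbors --- not just one --- have passed the separation threshold, and that arms dropping out of $\Psi\setminus\widehat{\Psi}$ never force a still-active arm to accumulate more than $O(1)$ extra pulls; this is precisely why a single threshold built from the \emph{smallest} gap $\hat{\triangle}$, combined with the fact that the round-robin schedule keeps all active pull counts within one of each other, is the right device. Everything else is the substitution of $\delta'(T)$ and the routine algebra already packaged in Lemmas~\ref{lemma:event} and~\ref{lemma:mbounds}.
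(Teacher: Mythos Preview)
Your proposal is correct and takes essentially the same route as the paper: work on the event $\mathcal{E}$ of Lemma~\ref{lemma:event}, combine the neighbor inequality with the round-robin relation between $T$ and the per-arm pull counts, and then replace the pairwise gap by the global minimum gap $\hat{\triangle}$. The only difference is cosmetic: the paper substitutes $\hat{m}_i \geq (T+1)/n - 1$ into the inequality $\hat{m}_i \leq 2(b+1)^2R^2\triangle_{ij}^{-2}\log\!\big(\pi^2 n T^2/(6\delta)\big)$ and solves the resulting implicit inequality for $T$ directly (again via Lemma~8 of \cite{antos2010active}), whereas you invoke the already-solved per-arm upper bound of Lemma~\ref{lemma:mbounds} as a black box and sum over the $n$ arms.
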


\begin{proof}
Consider two arms $i$ and $j$. When $i$ and $j$ still are neighbors, we have
$$
|\hat{y_i} - \hat{y_j}| \leq b [   \beta_i(m_i, \delta') + \beta_j(m_j, \delta')].
$$
With probability of at least 1-$\delta$,  $\mathcal{E}$ happens (Lemma \ref{lemma:event}). In the round $T$,  $\hat{m}_i$ achieves the maximal number of pulls on $i$ before $i$ and $j$ still are neighbors, and assume $\hat{m}_i = \min (\hat{m}_i, \hat{m}_j)$. $\triangle_{ij} = |y_i -y_j|$.    Based on Eq.\ref{eq:bound} in Lemma \ref{lemma:mbounds}, we can derive
$$
\begin{aligned}
\triangle_{ij} & \leq (b+1)(\beta_i(\hat{m}_i, \delta'(T)) + \beta_j(\hat{m}_j , \delta'(T)))\\
&=  2 (b+1) R\sqrt{\frac{-\log{\delta'(T)}}{2\hat{m}_i}}\\
&=  2 (b+1) R\sqrt{\frac{\log{\frac{\pi^2nT^2}{6\delta} }}{2\hat{m}_i}}\\
\Rightarrow \hat{m}_i & \leq \frac{  2 (b+1)^2 R^2\log{\frac{\pi^2nT^2}{6\delta}}}{ \triangle_{ij}^2}
\end{aligned}
$$
In round $T$, assume all the arms have not achieved the terminal status, we can obtain $T \leq n(\hat{m}_i+1)-1 \Rightarrow  \hat{m}_i \geq \frac{T+1}{n}-1 $. Hence,
$$
\begin{aligned}
\frac{T+1}{n}-1 \leq \hat{m}_i &\leq  \frac{  2 (b+1)^2 R^2\log{\frac{\pi^2nT^2}{6\delta}}}{ \triangle_{ij}^2}\\
(\frac{T+1}{n}-1) \triangle_{ij}^2 &\leq 2 (b+1)^2 R^2(\log{\frac{\pi^2n}{6\delta}} + 2\log{T}) \\
\frac{\triangle_{ij}^2}{n4(b+1)^2R^2}T -  \log T & \leq \frac{(n-1)\triangle_{ij}^2}{n4(b+1)^2R^2} + \frac{1}{2} \log \frac{\pi^2n}{6\delta}
\end{aligned}
$$
According to Lemma 8 in \cite{antos2010active}, we can derive
$$
\begin{aligned}
T&< 4 \mathbf{D}_1n( \log(2\mathbf{D}_1n) + \log \sqrt{ \frac{\pi^2n}{6\delta}}) + 2(n-1).
\end{aligned}
$$
Based on this inequality, the upper bound of $T$  is determined by $\triangle_{ij}$. 
Replace $\triangle_{ij}$ by $\hat{\triangle}$ and then prove this theorem.

\end{proof}



\section{Experimental Results}

With extensive experiments on both synthetic and real-world data sets, in this section, we evaluate the effectiveness and efficiency of \sys against existing baselines. 

\noindent \textbf{Baselines}. \xspace
We compare the proposed algorithms with the most related approach \cite{2017outlierarm} and the baselines it used.  \textit{RR} is the algorithm proposed by~\cite{2017outlierarm}, which pulls each arm in a round-robin way and terminates when there is no overlap between the confidence intervals of the given threshold and each arm. \textit{WRR} is a modified version of \textit{RR} to pull arms in a weighted round-robin way. Moreover, we add two more baselines, \textit{NRR} and \textit{IB}~\cite{2014combinatorial}, following the experiments in \cite{2017outlierarm}.

\noindent \textbf{Configurations.} \xspace
In the experiments, we set $\rho = 0.9$ and  $\delta = 0.1$ for all methods and data sets. For \sys, we measure its performance according to the ranking of  $\Omega$.
For \textit{NRR}, \textit{RR}, and \textit{WRR}, following \cite{2017outlierarm}, we set $k = \{2.0, 2.5, 3.0 \}$ for the $k$-sigma rule and report the best results of these baselines.

\subsection{ Data Sets}
In the evaluation, we use one synthetic data set and two real-world data sets.
For the arms of a data set, the expected rewards of all arms are known. Therefore, given $\epsilon$  and  $\rho$, we label the arm group as "outlier" if it satisfies the criteria of $(\epsilon, \rho)$-outlier arm group.

\noindent \textbf{Synthetic Data Sets}. \xspace 
We generated the synthetic data sets with different setting of $n$ and $\epsilon$, where $ n = \{ 20, 50, 100, 200, 400 \}$ and  $ \epsilon = \{ 2.5, 5 \}$. Moreover, we divided outlier arms into two types.
\begin{enumerate}
    \item \textbf{Upper-side outlier}: the injected outlier set, denoted by $\hatn$, is an ($\epsilon$, $\rho$)-outlier group with respect to $\sun$ and $\slown$ where $|\sun| = 0$ and $ |\slown| > (1-\rho)\times n$.
    \item \textbf{Intermediate outlier} : the injected $\hatn$ is an ($\epsilon$, $\rho$)-outlier group with respect to $\sun$ and $\slown$ where $|\sun|  >   \frac{(1-\rho)\times n}{2}  $ and $ |\slown| > \frac{(1-\rho)\times n}{2}  $.
\end{enumerate}

For each configuration, we injected the two types of outlier arms respectively.
There are 20 configurations in total. 
We first generated $\rho \times n$ normal arms with the expected reward randomly drawn from $[0, 1]$. Then, we injected  $(1-\rho)  \times n$ outlier arms until they satisfy the corresponding criterion of $(\epsilon, \rho)$-outlier arm group. 
Each reward obtained by pulling an arm follows a Bernoulli distribution, and thus $R$ = 1.

\noindent \textbf{Twitter}. \xspace 
It is a collected data set used in \cite{2017outlierarm} for the detection of outlier regions with respect to keywords from Twitter data. The data set contains 20 keywords and 47 regions with more than 5000 tweets, respectively. A multi-armed problem corresponds to a user looking to find the region where tweets have an exceptionally high probability of containing a specific keyword. Consider a keyword $w$. The 47 regions can be thought of as the arms with a Bernoulli distribution. In each round, we pull an arm to obtain a tweet from the corresponding region, and the reward is $1$ if the tweet contains $w$. Otherwise, the reward is $0$. Note that the expected rewards of each region for each keyword are available in this data set. And we use $(\epsilon = 2.5 , \rho = 90 \%)$ to generate the outlier labels.

\noindent \textbf{Yahoo! Today Module}. \xspace
This is a large-scale clickstream data set provided by Yahoo Webscope program. It contains 45,8811,883 user interaction events on Yahoo Today Module in a ten-day period in May 2009.
In this paper, we only focus on the interaction between user and F1  articles following \cite{2010licontextual}. An article can be regarded as an arm, where a pull on this article represents an interaction with a user. The reward received is $1$ if a click happens in this interaction. Therefore, the expected reward of an arm is the ratio of the number of clicks the corresponding article receives and the number of interactions it involves totally. This data set includes ten sub data sets of ten days, and we run all algorithms on the five sub data sets of May 02, 03, 05, 07, and 10. Also, in this data sets, we use $(\epsilon = 2.5 , \rho = 90 \%)$ to generate the outlier labels.

 \vspace{-0.5em}
\subsection{Performance Comparison}

For each test case, we run all algorithms $10$ times. Due to the limited space, we only report the average performance and the standard deviation. The metric "\textbf{correctness}" is the ratio of the number of correct results returned by an algorithm to the number of its total runs.

\noindent \textbf{Synthetic Data Sets}. \xspace
Figure \ref{synthetic1} presents the performance of each algorithm with varying $n$ and $\epsilon$.  \sys has the perfect performance on all synthetic data sets. For $RR$ and $WRR$, they achieve high accuracy on detecting Upper-side outliers, as shown in Figure \ref{synthetic1} (a) and (c), because they use the $k$-sigma rule to identify the arms with high expected rewards.
However, $RR$ and $WRR$ obtain the unsatisfactory performance on identifying intermediate outliers, as shown in Figure \ref{synthetic1} (b) and (d), since intermediate outliers cannot be distinguished from normal arms by exploring the mean reward and standard deviation.
In fact, the performance of $RR$ and $WRR$ changed drastically when using different $k$ on the same data set or setting the same $k$ on different data sets, because the standard deviation is influenced by the distribution of normal arms. Also, as the performance of $NRR$ and $IB$ is not satisfactory for detecting outlier arms, we will not report their efficiency next. In practice, the varying $\epsilon$ almost has no impact on the accuracy of \sys because it is adaptive to $\epsilon$. But all other algorithms were weakened as $\epsilon$ becomes smaller.

We also present the average cost (number of pulls) to terminate for each algorithm, as shown in Figure (\ref{synthetic2}). As we can see, \sys is the most efficient algorithm while maintaining perfect performance. Compared to $WRR$, \sys saves the 82\% cost on average, because \sys eliminates the unnecessary pulls on normal arms.

\begin{figure}[h]
\centering
\includegraphics[width =1.0\columnwidth]{./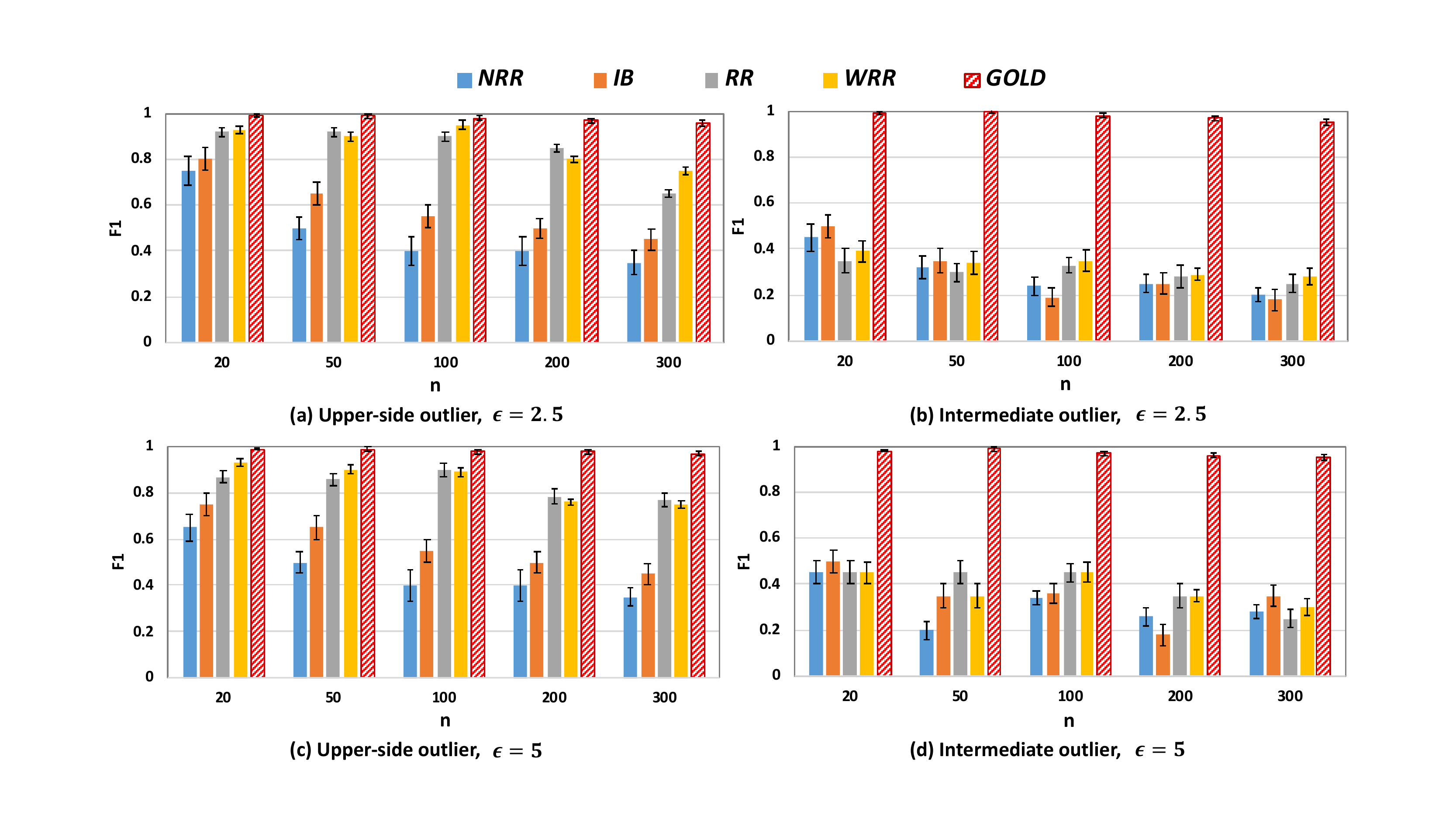}
 \vspace{-1.0em}
\caption{Accuracy comparison on synthetic data sets with 20 configurations.}
 \vspace{-1.0em}
 \label{synthetic1}
\end{figure}

\begin{figure}[h]
\centering
\includegraphics[width =1.0\columnwidth]{./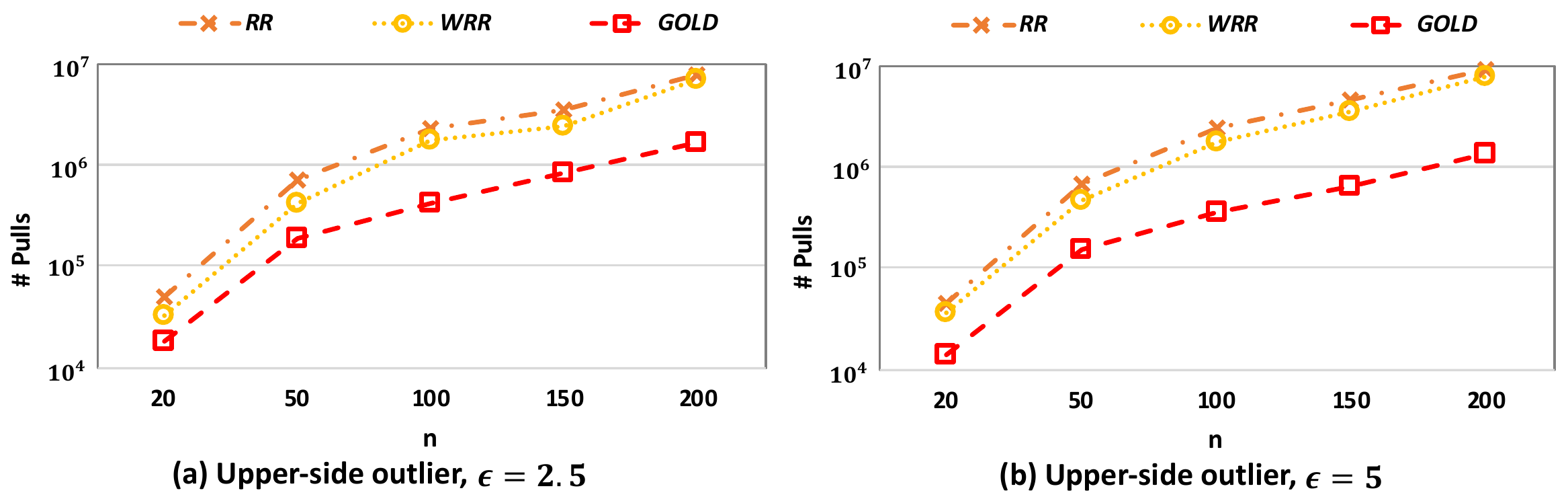}
 \vspace{-1.0em}
\caption{Efficiency comparison on synthetic data sets.} \label{synthetic2}
 \vspace{-1.0em}
\end{figure}

\begin{figure}[h]
\centering
\includegraphics[width =1.0\columnwidth]{./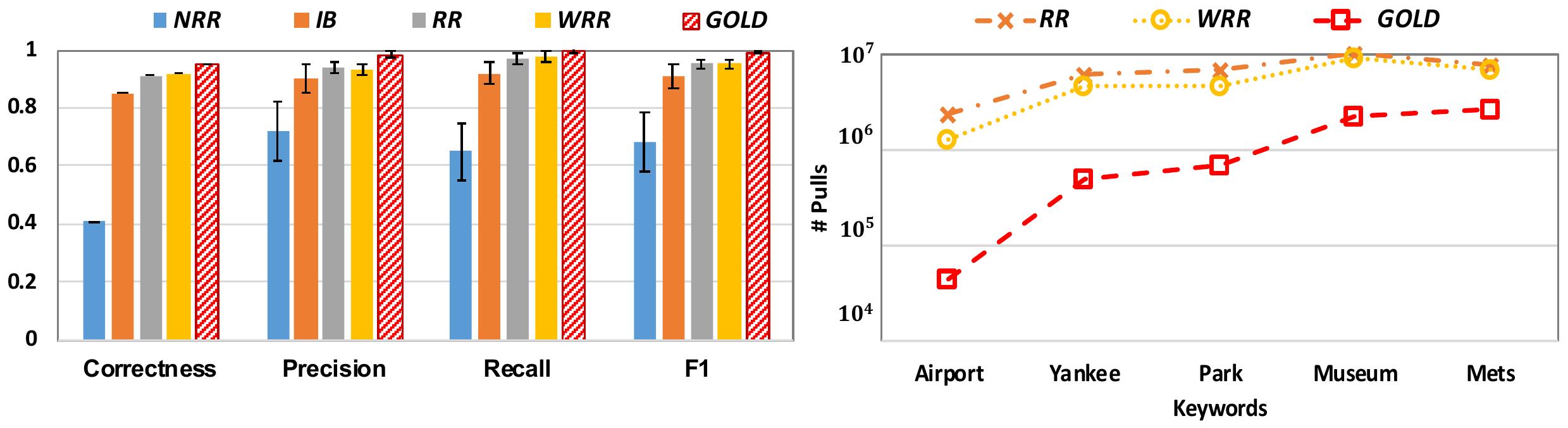}
 \vspace{-1.0em}
\caption{Effectiveness and efficiency on Twitter data set.} \label{twitter1}
\end{figure}

\noindent \textbf{Twitter}. \xspace
Figure \ref{twitter1} shows the performance of each method on this data set. As proven in the theoretical analysis, \sys achieves the required correctness, i.e., found the exactly correct outlier set with probability higher than $1-\delta$ (90\%).
As outlier arms all have exceptionally high expected rewards in this data set, $RR$ and $WRR$ also meet the correctness requirement. But $\sys$ has better empirical performance in terms of the F1 score. Since $\sys$ has a ranking for each outlier arm whereas $RR$ and $WRR$ only output the set of outlier arms, $\sys$ achieves higher precision with less false negatives. Unfortunately, the performance of $NRR$ is much worse than the rest.  $IB$ is a strong baseline, but the correctness is still less than $90 \%$.


Figure \ref{twitter1} also presents the average cost (the number of pulls) to terminate each algorithm. We choose $5$ keywords in the data set and report their efficiency.  \sys uses the minimal number of pulls compared to others while keeping almost perfect performance. \sys also is much faster than $WRR$ and $RR$ because the UCB of threshold in $WRR$ and $RR$ converges at a very slow rate.
In the case of keyword `Yankee', \sys saved 91\% of $WRR$'s cost and 93\% of $IB$ cost. Compared to $WRR$, \sys saved 89\% cost on average.

\begin{figure}[h]
\centering
\includegraphics[width =1.0\columnwidth]{./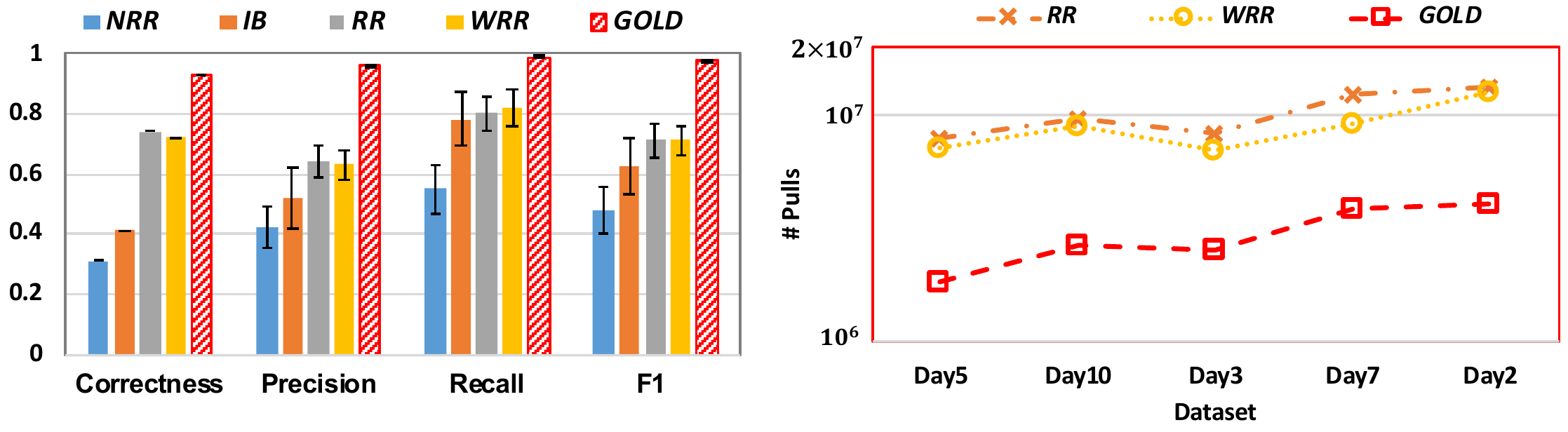}
 \vspace{-1.0em}
\caption{Effectiveness and efficiency on Yahoo data set.} \label{yahoo1}
\end{figure}

\noindent \textbf{Yahoo! Today Module}. \xspace Figure (\ref{yahoo1}) shows the results of all algorithms on Yahoo data set. It is easy to see that \sys achieves the required correctness (higher than $90\%$). On the contrary, $RR$ and $WRR$ fail to meet the correctness requirement because simply distinguishing outlier arms from normal arms by the mean plus standard deviation is likely to cause false negatives or false positives. 
For example, on May 03, even though we set $k=2$, some outlier arms are still missing; on May 07, even if we set $k=3$, some normal arms are still identified as outliers. Furthermore, the outlier arms on May 03 and 07 sub data sets contain some arms with exceptionally low expected rewards, which cannot be detected by $RR$ or $WRR$. The same as before, the performances of $NRR$ and $IB$ is much worse than the rest in this data set.

The right side of Figure (\ref{yahoo1}) exhibits the efficiency comparison of all algorithms on five days' sub data sets. $RR$ and $WRR$ terminate very slowly in this data set since the expected rewards of normal arms are very close to each other. However, the outlier arms are far from most of the other arms, which is leverage by \sys. Compared to the fastest baseline $WRR$, $\sys$ saves 78\% cost on average.

\vspace{-1.5pt}
\begin{figure}[h]
\centering
\includegraphics[width =0.6\columnwidth]{./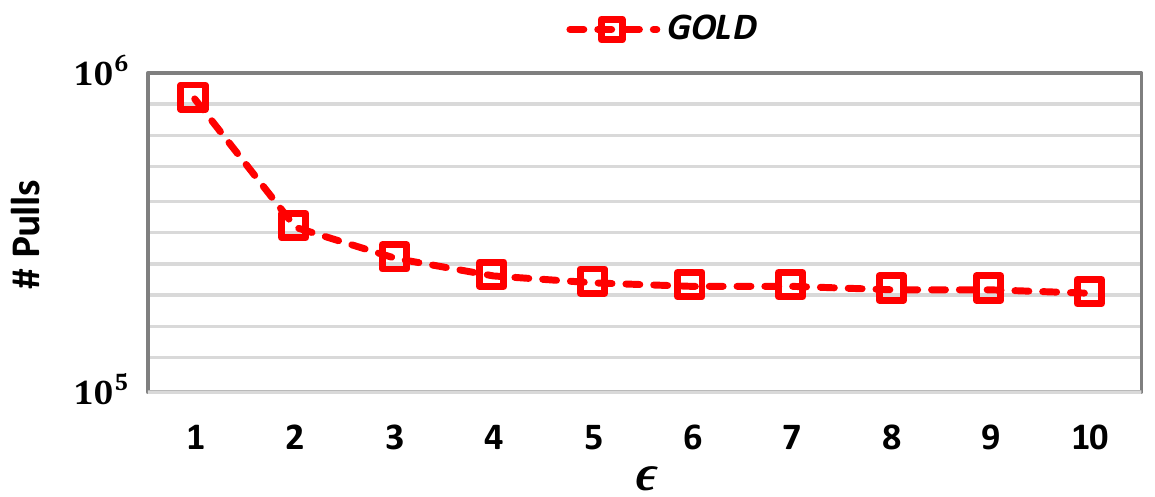}
\caption{Needed pulls by \sys\ with respect to $\epsilon$.} \label{eps}
 \vspace{-1.0em}
\end{figure}

\noindent \textbf{$\epsilon$ effect}. Figure \ref{eps} shows the number of pulls required by \sys\ with varying $\epsilon$ value. Here, we use the synthetic data set (Upper-side outlier, $n=100$). Consistent with our analysis in Theorem \ref{the:upperbound}, the number of pulls required by \sys decreases as $\epsilon$ increases.

 \vspace{-0.5em}
\section{Conclusion}
In this paper, we study a relatively new problem of identifying outlier arms in a MAB setting. Instead of applying a statistical rule, we propose new comprehensive definitions of outlier arms and outlier arm groups to identify the arms whose expected rewards are far from most of the other arms. This is widely applicable to many high-impact applications as compared to simply defining the outlier arms to be the ones with exceptionally high expected rewards. Moreover, we propose a novel algorithm named \sys, combining upper confidence bounds and graph features. Then, we analyze the properties of \sys from various aspects. Finally, we evaluate the empirical performance of our algorithm on both synthetic and real-world data sets, in comparison with state-of-the-art techniques. \sys achieves the near-perfect performance across all data sets using the least cost.


\hide{In the future, it will be challenging to extend the neighborhood distance to $k$-th nearest neighbor and explore $k$ while pulling the arms. The outlier arm group definition implicitly assumes that only one type of outlier exists in arms. It also is a challenge to formulate two types of outlier arms and prove the correctness of the proposed algorithm.}

\section{ ACKNOWLEDGMENTS}
This work is supported by National Science Foundation under Grant No. IIS-1947203 and Grant No. IIS-2002540, the U.S. Department of Homeland Security under Grant Award Number 17STQAC00001-03-03 and Ordering Agreement Number HSHQDC-16-A-B0001. The views and conclusions are those of the authors and should not be interpreted as representing the official policies of the funding agencies or the government.

\bibliographystyle{ACM-Reference-Format}
\bibliography{ref.bib}


\section{appendix}

The UCB $\beta$ of each arm follows the bounded distribution (Eq. (\ref{eq:boundeddis})), supposing the received reward is bounded within $[c, d]$ where $R = d-c.$ 

\begin{lemma} \label{lemma:UCB} (Upper Confidence Bound) 
According to Hoeffding's inequality, given a upper bound $\beta$, then the probability that difference between $\hat{y}_i$ and $y_i$ is larger than the bound can be bound as:
\begin{displaymath}
\mathbb{P}(|\hat{y}_i - y_i| > \beta) \leq 2 \exp{\frac{-2 m_i \beta^2}{R^2}}.
\end{displaymath}
Let $\delta' = 2 \exp{\frac{-2 m_i \beta^2}{R^2}}$, then we obtain:
\begin{displaymath}
\beta = \beta(m_i, \delta') = R\sqrt{\frac{-\log{\delta'}}{2m_i}}.
\end{displaymath}
\end{lemma}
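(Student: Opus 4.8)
The plan is to derive this lemma directly from Hoeffding's inequality, since it is essentially a repackaging of that bound in the notation of the paper. First I would recall the reward model of Section~\ref{sec:problem}: each arm $i$ has a fixed reward distribution supported on $[c,d]$, so after $m_i$ pulls the observed rewards $x_i^{(1)},\dots,x_i^{(m_i)}$ are i.i.d.\ with common mean $y_i$ and range $R=d-c$, and $\hat y_i$ is their empirical average. This is exactly the hypothesis of the classical Hoeffding bound, which I would apply in its two-sided form: combining the one-sided tail bounds $\mathbb{P}(\hat y_i - y_i > \beta)\le \exp(-2m_i\beta^2/R^2)$ and $\mathbb{P}(y_i-\hat y_i > \beta)\le \exp(-2m_i\beta^2/R^2)$ by a union bound gives the first display, $\mathbb{P}(|\hat y_i - y_i|>\beta)\le 2\exp(-2m_i\beta^2/R^2)$.

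The second display is then pure algebra: setting the right-hand side equal to the target level, $\delta' = 2\exp(-2m_i\beta^2/R^2)$, taking logarithms and solving for $\beta$ yields $\beta^2 = -R^2\log(\delta'/2)/(2m_i)$, hence $\beta=R\sqrt{-\log(\delta'/2)/(2m_i)}$. The paper folds the constant $2$ inside the logarithm into its notation and simply writes $\beta(m_i,\delta')=R\sqrt{-\log\delta'/(2m_i)}$ (cf.\ Eq.~(\ref{eq:boundeddis})); this only loosens the constant negligibly and does not affect any downstream argument (Lemmas~\ref{lemma:event}--\ref{lemma:group}, Theorem~\ref{the:upperbound}), all of which use only that $\beta$ is a valid confidence radius decaying like $\sqrt{\log(1/\delta')/m_i}$.

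There is no genuinely hard step here; the only two points worth a sentence of care are (i) the i.i.d.\ assumption --- although the \emph{rounds} at which arm $i$ is pulled are chosen adaptively, each fresh sample of arm $i$ is independent of the past, and the lemma is stated pointwise in a \emph{fixed} $m_i$, so the stopping-time subtlety is not present here and is instead absorbed later by the union over all $T$ in Lemma~\ref{lemma:event}; and (ii) the factor-of-two bookkeeping between the one-sided and two-sided tails, which is exactly the union bound used above. I therefore expect the ``main obstacle'' to be purely expository: stating the model assumptions cleanly enough that the invocation of Hoeffding is rigorous.
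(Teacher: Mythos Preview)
Your proposal is correct and matches the paper's approach: the paper does not give a separate proof of this lemma at all, treating it as an immediate consequence of Hoeffding's inequality stated within the lemma itself, and your write-up simply fleshes out that invocation with the union-bound and algebraic steps. Your observation about the dropped factor of $2$ in passing from $\delta'=2\exp(-2m_i\beta^2/R^2)$ to $\beta=R\sqrt{-\log\delta'/(2m_i)}$ is accurate and more careful than the paper.
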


 \vspace{-0.5em}

\textbf{Proof of Lemma \ref{lemma:event}}

\begin{proof}
Define the event
\begin{displaymath}
\mathcal{E}' = \{  \bigvee_{\forall T, \forall i}|\hat{y}_i - y_i| > \beta_{i}(m_i, \delta'(T))\}
\end{displaymath}
Then 
$$
\begin{aligned}
1 - \mathbb{P}(\mathcal{E}|\mathbf{I}) & = \mathbb{P}(\mathcal{E}'|\mathbf{I})\\
&\leq  \sum_{T=1}^\infty[\sum_{i=1}^n \mathbb{P}(|\hat{y}_i - y_i | > \beta_{i}(m_i, \delta'(T)))] \\
& \leq  \sum_{T=1}^\infty[\sum_{i=1}^n \delta'(T)]\\
&= \sum_{T=1}^\infty[\sum_{i=1}^n \frac{6\delta}{\pi^2 n T^2}]  = \frac{6}{\pi^2}\sum_{T=1}^\infty \frac{\delta}{T^2} = \delta\\
\Rightarrow \mathbb{P}(\mathcal{E}|\mathbf{I})& \geq 1-\delta.
\end{aligned}
$$
\end{proof}

\textbf{Proof of Lemma \ref{lemma:mbounds}}

\begin{proof}
We denote $|y_{i} - y_{j}|$ by $\triangle_{ij}$ . Then suppose $i$ and $j$ still are neighbor arms in round $T$, holding
\begin{equation} \label{eq:d}
|\hat{y_i} - \hat{y_j}| \leq b \left[  \beta_i(m_i, \delta') + \beta_j(m_j, \delta')\right]
\end{equation}
where $b>1$.
We can obtain
\begin{displaymath}
\begin{cases}
\hat{y}_i - \hat{y}_j \leq b \left[ \beta_{i}(m_i, \delta') + \beta_{j}(m_j, \delta')\right] , & \text{if} \ \hat{y}_i > \hat{y}_j \\
\hat{y}_i - \hat{y}_j \geq -  b \left[ \beta_{i}(m_i, \delta') + \beta_{j}(m_j, \delta') \right] , &  \text{otherwise}.\\
\end{cases}
\end{displaymath}
Based on Lemma \ref{lemma:event}, with probability 1-$\delta$, $\mathcal{E}$ happens. Then, we define a new event:
\begin{displaymath}
\mathcal{E} _{ij} = \{|\hat{y}_i - y_i|\leq \beta_{i}(m_i, \delta'(T)) \wedge |\hat{y}_j - y_j|\leq \beta_{j}(m_j, \delta'(T)), \forall T \}.
\end{displaymath}
Suppose $\mathcal{E}$ happens, and then $\mathcal{E}_{ij}$ must happen, deriving
$$
\begin{cases}
\hat{y}_i-\beta_i \leq y_i \leq \hat{y}_i + \beta_i \\
\hat{y}_j - \beta_j \leq y_j \leq \hat{y}_j + \beta_j. 
\end{cases}
$$
Then, assume $\hat{y}_i  > \hat{y}_j$ and we have
$$
\begin{aligned}
y_i - y_j &\leq \hat{y}_i + \beta_i - (\hat{y}_j - \beta_j)\\
 &= \hat{y}_i - \hat{y}_j + \beta_i + \beta_j \\
& \leq  b ( \beta_i + \beta_j)  + \beta_i + \beta_j\\
 & = (b+1)(\beta_i + \beta_j).
\end{aligned}
$$

Assume $\hat{m}_i = \min(\hat{m}_i, \hat{m}_j)$. Note that $\hat{m}_j \leq \hat{m}_i +1$ because we pull $i$ and $j$ respectively in an iteration.
Let $T+1$ be the round that is the first time when $i$ and $j$ are not neighbors. Suppose that the number of pulls on $i$ becomes $\hat{m}_i +1$.
It holds that
\begin{displaymath}
|\hat{y_i} - \hat{y_j}| > b [   \beta_i(\hat{m}_i+1, \delta') + \beta_j(\hat{m}_j, \delta')].
\end{displaymath}
Then,assume $ \hat{y}_i > \hat{y}_j$ and  obtain 
$$
\begin{aligned}
y_i - y_j &\geq \hat{y}_i - \beta_i - (\hat{y}_j + \beta_j)\\
 &= \hat{y}_i - \hat{y}_j - \beta_i - \beta_j \\
 & >  b ( \beta_i + \beta_j)  - (\beta_i + \beta_j) \\
 & = (b-1)(\beta_i(\hat{m}_i+1, \delta'(T+1)) + \beta_j(\hat{m}_j, \delta'(T+1))).
\end{aligned}
$$
Symmetrically, we can obtain the same result for $\hat{y}_i  < \hat{y}_j$. Thus, we can conclude that
\begin{equation} \label{eq:bound}
\begin{cases}
\triangle_{ij} \leq (b+1)(\beta_i(\hat{m}_i, \delta'(T)) + \beta_j(\hat{m}_j, \delta'(T)))\\
\triangle_{ij} > (b-1)(\beta_i(\hat{m}_i+1, \delta'(T+1)) + \beta_j(\hat{m}_j, \delta'(T+1)))
\end{cases} 
\end{equation}
Because $\hat{m}_i +1 \geq \hat{m}_j \geq \hat{m}_i $, we have
$$
\begin{aligned}
2(b-1) \beta_i(\hat{m}_i+1, \delta'(T+1)) &< \triangle_{ij} \leq 2(b+1) \beta_i(\hat{m}_i, \delta'(T)) \\ 
  2 (b-1) R\sqrt{\frac{-\log{\delta'(T+1)}}{2(\hat{m}_i+1)}} & < \triangle_{ij}\leq  2 (b+1) R\sqrt{\frac{-\log{\delta'(T)}}{2\hat{m}_i}}.
 \end{aligned}
$$
First, we have 
$$
\begin{aligned}
\triangle_{ij}\leq  2 (b+1) R\sqrt{\frac{\log{\frac{\pi^2nT^2}{6\delta} }}{2\hat{m}_i}}\\
\Rightarrow    \hat{m}_i  \leq 2 (b+1)^2 R^2  \frac{\log{\frac{\pi^2nT^2}{6\delta} }}{\triangle_{ij}^2}.
\end{aligned}
$$
In round $T$, assume all the arms have not achieved the terminal status, we have $T < n(\hat{m}_i+1)$. Then we have
$$
\begin{aligned}
\hat{m}_i & < \frac{ 2 (b+1)^2 R^2} {\triangle_{ij}^2}\log{\frac{\pi^2n^2(\hat{m}_i+1)^2}{6\delta} } \\
(\hat{m}_i+1) & < 2\mathbf{D}_1\log{(\hat{m}_i+1)} +    \mathbf{D}_1 \log{\frac{\pi^2n^2}{6\delta}}+1 .
\end{aligned}
$$

According to Lemma 8 in \cite{antos2010active}, we have
\begin{equation}
\hat{m}_i < 4\mathbf{D}_1\log{(2\mathbf{D}_1\sqrt{\frac{\pi^2n^2}{6\delta} } )}+1.
\end{equation}
Second, based on Eq.(\ref{eq:bound}), we have 
$$
\begin{aligned}
 \triangle_{ij} &> 2 (b-1) R\sqrt{\frac{\log{\frac{\pi^2n(T+1)^2}{6\delta }}}{2(\hat{m}_i+1)}}  \\
 \Rightarrow    \hat{m}_i  &> \frac{2 (b-1)^2 R^2}{{\triangle_{ij}^2}}  \log{\frac{\pi^2n(T+1)^2}{6\delta}}-1.
\end{aligned}
$$
Assume all the arms have not achieved the terminal status, we
have $T > n(\hat{m}_i-1)$. Then, we can obtain
$$
\begin{aligned}
\hat{m}_i  & > \frac{2 (b-1)^2 R^2}{{\triangle_{ij}^2}}  \log{\frac{\pi^2n^2 (\hat{m}_i-1)^2}{6\delta}}-1\\
\Rightarrow \hat{m}_i -1 &> 2\mathbf{D}_2\log{(\hat{m}_i-1)} + \mathbf{D}_2\log{\frac{\pi^2n^2}{6\delta}}-2.
\end{aligned}
$$

According to Lemma 8 in \cite{antos2010active}, we can derive
$$
\hat{m}_i > 4 \mathbf{D}_2 \log(2\mathbf{D}_2\sqrt{\frac{\pi^2n^2}{6\delta}})-3.
$$
Putting everything together, we can obtain
 \begin{equation}
 4 \mathbf{D}_2 \log(2\mathbf{D}_2\sqrt{\frac{\pi^2n^2}{6\delta}})-3 <  \hat{m}_i <  4\mathbf{D}_1\log{(2\mathbf{D}_1\sqrt{\frac{\pi^2n^2}{6\delta} } )}+1.
\end{equation}
We can obtain the same result assuming $\hat{m}_j = \min(\hat{m}_i, \hat{m}_j)$. There, it directly proves Lemma \ref{lemma:mbounds} if we replace $\hat{m}_i$ by $\hat{m} = \min(\hat{m}_i, \hat{m}_j)$.
\end{proof}
\textbf{Proof of Lemma \ref{the:split}}

\begin{proof}
Let $i = \arg\min_{i' \in \mathcal{N}, i' \not = j}|y_j - y_{i'}| $ and let $\hat{m}_j$ be the maximal number of pulls on $j$ when $i$ and $j$ still are neighbor arms. 
Given $\hat{m}_{j'} = \min(\hat{m}_j,\hat{m}_i)$, based on Lemma \ref{lemma:mbounds}, we can derive 
$$
 4 \mathbf{D}_2 \log(2\mathbf{D}_2\sqrt{\frac{\pi^2n^2}{6\delta}})-3 < \hat{m}_{j'} <  4\mathbf{D}_1\log{(2\mathbf{D}_1\sqrt{\frac{\pi^2n^2}{6\delta} } )}+1.
$$
Then, given any $k \in \mathcal{N}$, let $l =   \arg\min_{i' \in \mathcal{N}, i' \not = k}|y_k - y_{i'}|$.
Based on Eq. (\ref{eq:cond3}), it has  $\triangle_{ij} > (1 + \epsilon)\triangle_{kl}$ where $\triangle_{kl} = |y_k - y_l|$. Let  $\hat{m}_{k'} = \min(\hat{m}_k,\hat{m}_l)$. Based on Lemma \ref{lemma:mbounds}, we can derive 
$$
 4 \mathbf{D}'_2 \log(2\mathbf{D}'_2\sqrt{\frac{\pi^2n^2}{6\delta}})-3 <  \hat{m}_{k'} <  4\mathbf{D}'_1\log{(2\mathbf{D}'_1\sqrt{\frac{\pi^2n^2}{6\delta} } )}+1.
$$
If we want to prove $\hat{m}_{j'} < \hat{m}_{k'}$, we need to prove the upper bound of $\hat{m}_{j'}$ is less than the lower bound of $\hat{m}_{k'}$. Formally,
\begin{equation}\label{eq:ddd}
4 \mathbf{D}'_2 \log(2\mathbf{D}'_2\sqrt{\frac{\pi^2n^2}{6\delta}})-3 -[ 4\mathbf{D}_1\log{(2\mathbf{D}_1\sqrt{\frac{\pi^2n^2}{6\delta} } )}+1] > 0
\end{equation}
where
$$ 
\mathbf{D}'_2 =  \frac{2(b-1)^2R^2}{\triangle_{kl}^2}   \ \text{and} \ \mathbf{D}_1 = \frac{2(b+1)^2R^2}{\triangle_{ij}^2}.
$$
Thus, we have
$$
\begin{aligned}
\mathbf{D}'_2 \log(2\mathbf{D}'_2\sqrt{\frac{\pi^2n^2}{6\delta}}) - \mathbf{D}_1\log(2\mathbf{D}_1\sqrt{\frac{\pi^2n^2}{6\delta} } )& > 1 \\
\mathbf{D}'_2 (\log\mathbf{D}'_2 + \log2\sqrt{\frac{\pi^2n^2}{6\delta}})
 -  \mathbf{D}_1(\log\mathbf{D}_1 + \log2\sqrt{\frac{\pi^2n^2}{6\delta} } )& > 1\\
 \mathbf{D}'_2 \log\mathbf{D}'_2 -\mathbf{D}_1\log\mathbf{D}_1 + (\mathbf{D}'_2 -\mathbf{D}_1)\log2\sqrt{\frac{\pi^2n^2}{6\delta}}
&> 1\\
\end{aligned}
$$
To prove the inequality above, we prove
$$ 
 \mathbf{D}'_2 \log\mathbf{D}'_2 -\mathbf{D}_1\log\mathbf{D}_1 >1.
$$
Because $b>1$ and $R\geq\triangle_{ij}$, $\mathbf{D}_1>8$. We need to prove 
$$
\mathbf{D}'_2 \log\mathbf{D}'_2 -\mathbf{D}_1\log\mathbf{D}_1 > \frac{\mathbf{D}_1}{8} >1.
$$
This gives
$$
\begin{aligned}
\mathbf{D}'_2 \log\mathbf{D}'_2 -\mathbf{D}_1\log\mathbf{D}_1 - \frac{\mathbf{D}_1}{8}& > 0 \\
\mathbf{D}'_2 \log\mathbf{D}'_2 -\mathbf{D}_1 (\log \mathbf{D}_1 +\log e^{\frac{1}{8}}) &> 0 \\
\mathbf{D}'_2 \log\mathbf{D}'_2 -\mathbf{D}_1 \log( \mathbf{D}_1 e^{\frac{1}{8}}) &> 0 \\
\end{aligned}
$$
To hold the above inequality,  we prove $\mathbf{D}'_2  > \mathbf{D}_1 e^{\frac{1}{8}}$:
$$
\begin{aligned}
\frac{2(b-1)^2R^2}{\triangle_{kl}^2} & > \frac{2(b+1)^2R^2}{\triangle_{ij}^2}e^{\frac{1}{8}} \\
\triangle_{ij} & > \frac{b+1}{b-1}e^{\frac{1}{16}} \triangle_{kl}
\end{aligned}
$$
Replacing $b$ by $ \frac{1 +e^{\frac{1}{16}}+ \epsilon}{1 -e^{\frac{1}{16}} + \epsilon}$, we obtain 
$
\triangle_{ij}  > (1 + \epsilon) \triangle_{kl},
$
which completely follows the assumed condition.
Therefore Ineq.(\ref{eq:ddd}) is true and $\hat{m}_{j'} < \hat{m}_{k'}$, with the probability of at least $1- \delta$.

As for each $k \in \mathcal{N}$, $\triangle_{ij} \geq (1 + \epsilon) \triangle_{kl}$, similarly, we can prove  $\hat{m}_{j'} < \hat{m}_{k'}$ for any pair of $k$ and $l$. Therefore, in the round $T$, when $j$ is not the neighbor of any arms of $\mathcal{N}$, for any $k$ and $l$, they still are neighbor arms. 
Because $\trimin(j, \mathcal{N} ) >  (1+\epsilon) 
 \dia(k, \mathcal{N}), \forall k \in \mathcal{N}$,
it indicates $\mathcal{N}$ must be a connected component (i.e., arm community) in $G$ at round $T$ when the event $\mathcal{E}$ happened with probability of at least $1-\delta$. 
\end{proof}

\textbf{Proof of Lemma \ref{lemma:group}}

\begin{proof}
For each $j \in \hatn$, $j$ is an $(\epsilon, \rho)$-outlier arm with respect to $\sun$ and $\slown$. 
As $$
\begin{cases}
\ \forall i \in \sun,  \    \triangle_{\text{min}}(j,  \sun ) >  (1+\epsilon) 
 \dia(i, \sun) \\
 \ \forall i \in \slown,  \    \triangle_{\text{min}}(j,  \sun ) >  (1+\epsilon)
 \dia(i, \slown), \\
 \end{cases}
$$
based on Lemma \ref{the:split}, in the pulling process, the status exists that  $j$ is not the neighbor of any arms of $\sun$ while $\sun$ still belongs to a community denoted by $\mathcal{M}_1$ and $\slown$ still belong to a community denoted by $\mathcal{M}_2$.

As $$
\begin{cases}
 \ \forall i \in \sun,  \    \triangle_{\text{min}}(j, \slown) >  (1+\epsilon)  \dia(i, \sun),\\
\ \forall i \in \slown,  \    \triangle_{\text{min}}(j, \slown) >  (1+\epsilon)  \dia(i, \slown),\\
\end{cases}
$$
based on Lemma \ref{the:split}, in the pulling process, the status exists that  $j$ is not the neighbor of any arms of $\slown$ while $\sun$ still belongs to $\mathcal{M}_1$ and $\slown$ still belong to $\mathcal{M}_2$.

Therefore, we can drive that the status exists where  $j$ is not the neighbor of any arms of $\sun$ and $\slown$ while $\sun$ still belongs to $\mathcal{M}_1$ and $\slown$ still belong to $\mathcal{M}_2$, with the probability of at least $1-\delta$
\end{proof}

\textbf{Proof of Theorem \ref{theo1}}

\begin{proof}
Based on Lemma \ref{lemma:group}, for each $j \in \hatn$, $j$ is not the neighbor of  any arms of $\sun$ and $\slown$ while $\sun$ still belongs to a community $\mathcal{M}_1$ and  $\slown$ still belongs to a community $\mathcal{M}_2$. Assume in the $\hat{T}$, the status first happens. As $|\sun| + |\slown| > \rho \times n $ and $|\hatn| + |\sun| + |\slown| = n $ , it has $|\hatn|< (1-\rho)\times n $. According Line 31-34 of $\sys$, it has $\forall j \in \hatn, S[j] \leq \hat{T}$.

Because $|\sun| > (1-\rho)\times n$, if $|\sun| \not = 0$, all the arms of $\sun$ have not achieved the terminal status. Thus, we need keep pulling the arm of $\sun$ until it achieves the terminal status. It has $\forall i \in \sun, S[i] > \hat{T}$. Similarly, we can derive    $\forall i \in \slown, S[i] > \hat{T}$.
Putting everything together, with the probability of at least $1-\rho$, it has
$$
\begin{cases}
\forall j \in \hatn, S[j] \leq \hat{T}\\
\forall i \in  \sun \cup \slown, S[i] > \hat{T},
\end{cases}
$$
which directly proves the Theorem \ref{theo1} .
\end{proof}

\end{document}